\documentclass{article}

\usepackage{microtype}
\usepackage{graphicx}
\usepackage{subcaption}
\usepackage{booktabs} 
\usepackage{multirow,makecell}
\usepackage[table,svgnames]{xcolor}  
\usepackage{colortbl}
\usepackage[most]{tcolorbox}
\definecolor{CadetBlue}{RGB}{95,158,160} 
\definecolor{AlignedBlue}{RGB}{245,250,255}  
\definecolor{BorderGreen}{RGB}{244,252,246}  
\definecolor{ComplPurple}{RGB}{250,245,255}  

\newcommand{\lightsep}{\arrayrulecolor{black!25}\hline\arrayrulecolor{black}}

\usepackage{hyperref}

\usepackage[accepted]{icml2026}
\usepackage{enumitem}
\usepackage{pifont}

\usepackage{amsmath}
\usepackage{amssymb}
\usepackage{mathtools}
\usepackage{amsthm}

\usepackage[capitalize,noabbrev]{cleveref}

\theoremstyle{plain}
\newtheorem{theorem}{Theorem}[section]

\newtheorem{lemma}[theorem]{Lemma}

\theoremstyle{definition}
\newtheorem{definition}[theorem]{Definition}
\newtheorem{assumption}[theorem]{Assumption}
\theoremstyle{remark}

\usepackage[textsize=tiny]{todonotes}

\icmltitlerunning{Large Vision–Language Models Get Lost in Attention}

\begin{document}

\twocolumn[
  \icmltitle{Large Vision–Language Models Get Lost in Attention}



  \icmlsetsymbol{equal}{*}

  \begin{icmlauthorlist}
    \icmlauthor{Gongli Xi}{scse_bupt,key_bupt}
    \icmlauthor{Ye Tian}{key_bupt,scs_bupt}
    \icmlauthor{Mengyu Yang}{key_bupt,scs_bupt}
    \icmlauthor{Huahui Yi}{ntu}
    \icmlauthor{Liang Lin}{IoA}
    \icmlauthor{Xiaoshuai Hao}{xiaomi}
    \icmlauthor{Kun Wang}{ntu}
    \icmlauthor{Wendong Wang}{key_bupt,scs_bupt}
  \end{icmlauthorlist}

  \icmlaffiliation{scse_bupt}{School of Cyberspace Security, Beijing University of Posts and Telecommunications, Beijing, China}
  \icmlaffiliation{key_bupt}{State Key Laboratory of Networking and Switching Technology, Beijing University of Posts and Telecommunications, Beijing, China}
  \icmlaffiliation{scs_bupt}{School of Computer Science (National Pilot Software Engineering School), Beijing University of Posts and Telecommunications, Beijing, China}
  \icmlaffiliation{ntu}{Nanyang Technological University, Singapore}
  \icmlaffiliation{xiaomi}{Xiaomi EV, Beijing, China. \textit{Xiaoshuai Hao lead this project}}
  \icmlaffiliation{IoA}{Institute of Information Engineering, Chinese Academy of Sciences, Beijing, China}
  \icmlcorrespondingauthor{Ye Tian}{yetian@bupt.edu.cn}
  \icmlcorrespondingauthor{Kun Wang}{wang.kun@ntu.edu.sg}

  \icmlkeywords{Machine Learning, ICML}

  \vskip 0.3in
]



\printAffiliationsAndNotice{}  

\begin{abstract}
Despite the rapid evolution of training paradigms, the decoder backbone of large vision--language models (LVLMs) remains fundamentally rooted in the residual-connection Transformer architecture. Therefore, deciphering the distinct roles of internal modules is critical for understanding model mechanics and guiding architectural optimization. While prior statistical approaches have provided valuable attribution-based insights, they often lack a unified theoretical basis. To bridge this gap, we propose a unified framework grounded in \textit{information theory and geometry} to quantify the \textbf{geometric and entropic nature} of residual updates. Applying this unified framework reveals a fundamental functional decoupling: \textbf{Attention acts as a subspace-preserving operator} focused on reconfiguration, whereas \textbf{FFNs serve as subspace-expanding operators} driving semantic innovation. Strikingly, further experiments demonstrate that replacing learned attention weights with predefined values (e.g., Gaussian noise) yields comparable or even superior performance across a majority of datasets relative to vanilla models. These results expose severe \textbf{misallocation and redundancy} in current mechanisms, suggesting that state-of-the-art LVLMs effectively ``get lost in attention'' rather than efficiently leveraging visual context. Our code is publicly available at \href{https://github.com/Lrbomchz/vlms_lost_in_attn}{this link}.
\end{abstract}

\section{Introduction}
Large vision–language models (LVLMs) have rapidly evolved from large language models (LLMs) by extending Transformer-based sequence modeling to jointly process natural language and visual signals \citep{vaswani2017attention}. Early vision–language representation learning (e.g., contrastive pretraining) established strong image–text alignment that later LVLMs could leverage as a visual grounding interface \citep{radford2021learning}. Subsequent LVLMs increasingly unify pretrained vision encoders with LLM backbones, enabling few-shot multimodal generalization and instruction-following behavior at scale \citep{alayrac2022flamingo,li2023blip,liu2023visual,hao2025mimo}. In parallel, reasoning-oriented paradigms have further endowed these models with improved deliberation and problem-solving behaviors \citep{wei2022chain,jaech2024openai,guo2025deepseek,zhang2025video,tan2025reason}. Despite the fast pace of architectural and training innovations, the dominant LVLM family remains fundamentally grounded in the Transformer architecture \citep{vaswani2017attention}.

From an interpretability standpoint, the standard Transformer layer is composed of two core submodules, namely multi-head self-attention and feed-forward network (FFN), and each submodule is wrapped by residual connections, so that every submodule produces an additive update that is written back into a shared residual stream representation \citep{vaswani2017attention,elhage2021mathematical,skean2025layer}. A common working \textit{hypothesis} is that \textbf{attention blocks are the primary substrate for in-context reasoning}, implementing context-dependent algorithms such as induction/copy-based mechanisms \citep{olsson2022context}. In contrast, \textbf{FFNs are often characterized as storing and retrieving distributional associations}, behaving like key–value memories whose activated patterns can induce next-token distributions that resemble shallow n-gram continuations \citep{geva2021transformer,edelman2024evolution}.

To probe this modularity \textit{hypothesis}, attention interpretability work has largely taken a \textbf{statistical perspective} that treats attention related signals as measurable proxies and attributes function via empirical distributions \citep{zhou2024role,kahardipraja2025atlas}, correlations \citep{jain2019attention,abnar2020quantifying}, and controlled interventions \citep{serrano2019attention,nam2025causal}.
More recently, this statistical toolkit has been extended to \textbf{visual attention} in LVLM decoders, where attention links text to visual tokens. Empirical analyses reveal systematic phenomena such as \emph{visual attention sink} \citep{kang2025see} and \emph{visual attention drift} \citep{liu2025more,guan2026mitigating}, which together indicate that models often under allocate attention to truly informative visual evidence.
Given these advances, LVLM module-level interpretability still lacks a unifying \textbf{information theoretic and geometric} framework that can characterize, and explicitly contrast, how different submodules contribute to representation structure in multimodal settings. In contrast, the representation analysis literature for LLMs already uses such lenses to evaluate representation quality across depth \citep{razzhigaev2024shape,wei2024diff} and to study joint dynamics \citep{skean2025layer,tian2023joma}. This gap motivates bringing these principled lenses into LVLM analysis to address the missing perspective and enable module specific and modality grounded comparisons.

To bridge this theoretical gap, we present a unified framework grounded in \textbf{information theory and differential geometry} to \emph{quantify and contrast module-level functional contributions} in LVLM residual-stream computation. By adopting the manifold hypothesis \citep{bengio2013representation} for representation space, we introduce two complementary metrics: \textbf{Representation Information Discrepancy (RID)} and \textbf{Mixing Information Gain (MixIG)}. These metrics decompose the contribution of residual updates into two distinct geometric effects: \textit{innovation}, which quantifies external information injection that expands the semantic subspace or alters spectral complexity, and \textit{reconfiguration}, which measures the entropic redistribution of information within the existing support. We conduct extensive experiments across 15 state-of-the-art LVLMs spanning three dominant architectures on a broad suite of multimodal benchmarks. Our analysis reveals two profound insights: first, we quantitatively validate a sharp functional decoupling in Transformer residual stream computation: attention primarily performs entropic \emph{reconfiguration} that preserves the existing representation support, whereas FFNs dominate \emph{innovation} by introducing new semantic directions.
Building on this division of labor, we further diagnose a systemic pathology in current LVLMs: decoder visual attention often fails to perform meaningful mixing over question-relevant visual evidence, and instead exhibits substantial redundancy, frequently getting lost in interaction patterns with limited contribution to informative updates.

Our main contributions are summarized as follows:
\begin{itemize}[topsep=1pt, itemsep=1pt, parsep=0pt, partopsep=0pt,leftmargin=1.0em]
\item \textbf{Theoretical Framework:} We propose a rigorous formalism based on the manifold hypothesis to define representational information. We introduce RID and MixIG as dual metrics to quantify the geometric and entropic impact of residual updates, offering a generalized tool for probing representation dynamics.
\item \textbf{Module-level Interpretability:} We provide a quantitative explanation of the distinct roles within Transformer blocks. We demonstrate that Attention and FFNs operate in orthogonal regimes—\textit{reconfiguration} versus \textit{innovation}—thereby substantiating the modularity hypothesis with geometric evidence.
\item \textbf{Empirical Diagnostics:} We uncover critical inefficiencies in LVLM designs. Our results highlight that despite architectural scaling, current models suffer from severe informational redundancy in visual processing, suggesting that the integration of visual tokens is often computationally expensive yet informationally sparse.
\end{itemize}

\section{Related work}

\textbf{Interpretability of LLMs.}
A large body of work studies what information is encoded in LLM representations and where it appears in the network \cite{belinkov2019analysis}. Early work uses lightweight linear probes on intermediate hidden states \citep{conneau2018you,hewitt2019structural,belrose2023eliciting}. Subsequent decoding based efforts, such as the tuned lens, map hidden states to vocabulary distributions \citep{belrose2023eliciting}. Alongside probing and decoding, sparse feature learning approaches, including transcoders \citep{dunefsky2024transcoders} and sparse  autoencoders \cite{cunningham2023sparse}, map representations into a sparse and more discrete feature space \citep{ameisen2025circuit}. 

\begin{figure*}[ht]
  \vskip 0.2in
  \begin{center}
    \centerline{\includegraphics[width=\textwidth]{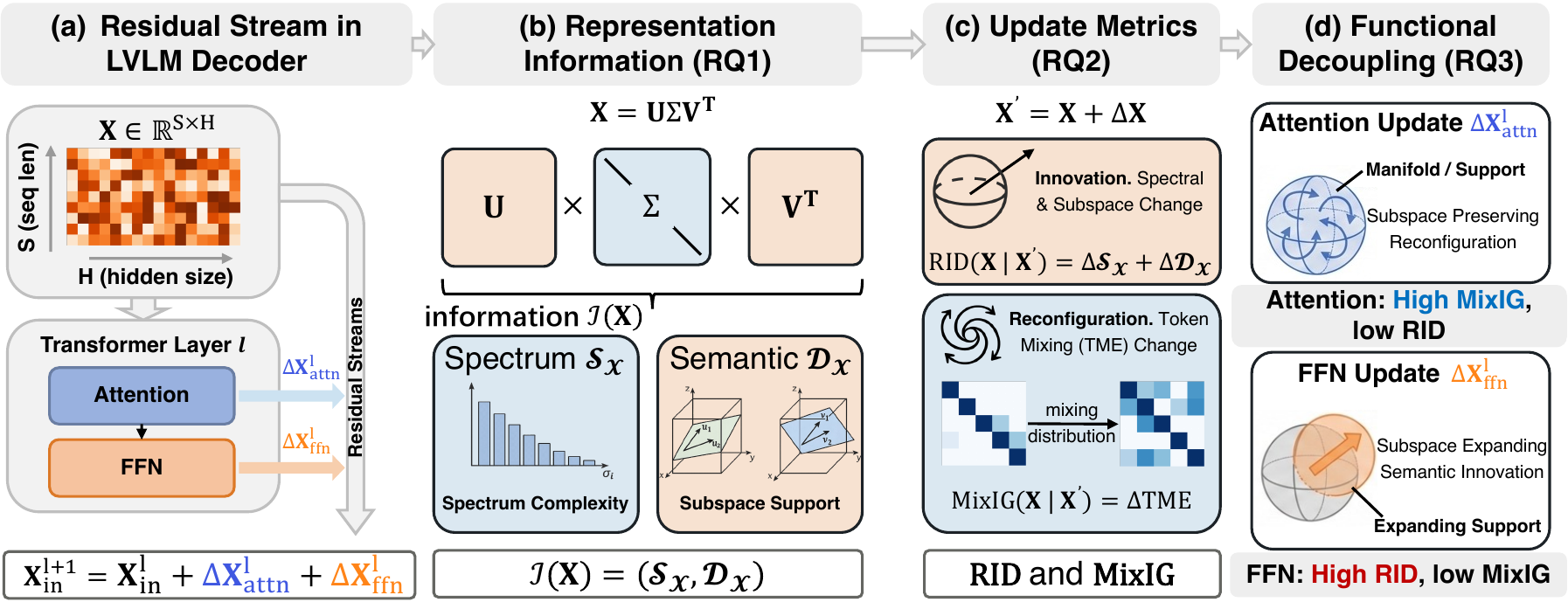}}
    \caption{Overview of Our Interpretability Framework: \textbf{(a)} the LVLM residual stream; \textbf{(b)} representation information in $\mathbf{X}$, where SVD yields Spectrum $\mathcal{S}_{\mathbf{X}}$ and semantic support $\mathcal{D}_{\mathbf{X}}$; \textbf{(c)} update-level effects of $\Delta\mathbf{X}$, quantified by \textsc{RID} for innovation and \textsc{MixIG} for reconfiguration; \textbf{(d)} layer-wise functional decomposition, revealing an orthogonal division of labor where attention behaves as a subspace-preserving operator and FFNs act as subspace-expanding operators.
    }
    \label{fig:info}
  \end{center}
\end{figure*}

\textbf{Module Interpretability.}
Module interpretability asks whether internal Transformer components provide meaningful explanations of model behavior. For attention, foundational studies show that raw attention weights can be an unreliable attribution signal \citep{jain2019attention,serrano2019attention,wiegreffe2019attention}. To better capture how attention-mediated influence accumulates, attention rollout and attention flow estimate propagation across layers \citep{abnar2020quantifying,kim2025interpreting}. More recent work moves beyond token-level importance to head-level functionality by combining dataset-grounded attribution with causal validation \citep{nam2025causal,kahardipraja2025atlas,zhou2024role,du2025multi}. Complementarily, parameter-based approaches infer head functionality without per-prompt inference traces \citep{elhelo2025inferring}. In parallel, module-oriented analyses show that Feed-Forward layers can act as key–value memories \citep{geva2021transformer,qiu2024empirical}. By contrast, our work provides a unified information-theoretic and geometric framework that quantifies how different residual-stream updates contribute via innovation versus reconfiguration, enabling direct, module-wise comparison beyond attribution alone.

\textbf{Information theory in LLM interpretability.}
Information-theoretic views frame interpretability in terms of information preservation, compression, and redundancy in representations. One line focuses on representation quality evaluation, using information and geometry motivated measures such as entropy, rank based quantities to assess whether embeddings preserve task relevant structure \citep{agrawal2022alpha,deb2025information,li2025lost}. A second line uses these measures for layerwise analyses, aiming to characterize how representational properties change across the network \citep{skean2025layer,ali2025entropy}. A third line emphasizes compression and redundancy reduction as a model level capability that can correlate with performance and scaling trends \citep{wei2024diff,yu2024white,havrilla2024understanding}. However, existing information theoretic work rarely provides \emph{module-level interpretability} for module itself~\citep{lai2021information}, especially in the LVLM setting.


Overall, we connect \emph{module-level residual-stream updates} in LVLMs to information theory and geometry by \emph{operationalizing} each update as an observable innovation--reconfiguration decomposition on representations. This framework turns prior statistically grounded module-level functional attributions into measurable information-flow statements, and it reveals that attention scores in current LVLMs contain substantial redundancy. Specifically, we replace part of the learned attention scores with random noise and find that model performance is largely preserved, even though this scoring step is a major computational bottleneck in standard self-attention, whose cost scales quadratically with sequence length.

\section{A Unified Interpretability Framework for the Residual Stream}
\label{sec:method}

In this section, we first introduce the notation and research questions in Sec.~\ref{sec:method:pre}. We then formalize representation information from an information-theoretic and geometric perspective in Sec.~\ref{sec:method:info}. Finally, in Sec.~\ref{sec:method:rq2}, we develop quantitative metrics for evaluating residual-stream updates.

\subsection{Preliminaries}
\label{sec:method:pre}

\subsubsection{Motivation and Notation}
\label{sec:method:moti}
Consider an input $\mathcal{I}$, for example, a sequence of visual and language tokens. A multi-module neural network maps $\mathcal{I}$ to a hidden-state matrix $\mathbf{X} \in \mathbb{R}^{S \times H}$, where $S$ is the token length and $H$ is the hidden dimension. Throughout the forward pass, the representation is updated via residual connections. At each step, a module produces an additive update $\Delta \mathbf{X}$, yielding
$
\mathbf{X}_{\text{new}} = \mathbf{X}_{\text{old}} + \Delta \mathbf{X}.
$
This residual-update view raises three progressively refined questions:

\begin{enumerate}[
  align=left,
  topsep=1pt,
  itemsep=1pt,
  parsep=0pt,
  partopsep=0pt,
  labelwidth=!,
  leftmargin=0.8em,
  widest=RQ3,
  label=\textbf{RQ\arabic*:}
]
    \item How should we quantify the information contained in a representation $\mathbf{X}$?
  \item How should we quantify what $\Delta \mathbf{X}$ contributes to $\mathbf{X}$?
  \item How can we use $\Delta \mathbf{X}$ to analyze and contrast the functional roles of different modules?
\end{enumerate}
Answering these questions provides a principled foundation for tracking information flow across layers, characterizing when module updates are informative versus redundant, and understanding how different modules shape multimodal representations during inference. For clarity, we summarize the notation used throughout the paper in \textit{Appendix} Table~\ref{tab:notation}.

\subsubsection{Residual Stream and Attention in LVLMs}
We next specify the LVLM setting and introduce the residual-stream view of Transformer decoding.

\textbf{Large Vision--Language Models (LVLMs).}
We consider decoder-style LVLMs that process multimodal inputs by converting them into a single token sequence. Concretely, an image is encoded by a visual encoder and mapped through a modality projector into a sequence of visual tokens $\mathbf{X}^{(v)} \in \mathbb{R}^{S_v \times H}$. Textual inputs are tokenized into system and user tokens $\mathbf{X}^{(s)} \in \mathbb{R}^{S_s \times H}$ and $\mathbf{X}^{(q)} \in \mathbb{R}^{S_q \times H}$. We denote the concatenated input sequence by
\[
\mathbf{X}^{(c)} = \big[\mathbf{X}^{(s)},\,\mathbf{X}^{(v)},\,\mathbf{X}^{(q)}\big] \in \mathbb{R}^{S_c \times H},
\quad
S_c = S_s + S_v + S_q.
\]
At decoding step $t$, the model generates an output token $y_t$ from
\[
p(y_t \mid \mathbf{X}^{(c)}, \mathbf{y}_{<t}),
\quad
\mathbf{y}_{<t} = \{y_i\}_{i=1}^{t-1},
\]
where $\mathbf{y}_{<t}$ determines the autoregressive context and $\mathbf{X}^{(c)}$ provides the multimodal conditioning.

\textbf{Attention in LVLM decoders.}
Let the decoder have $L$ Transformer layers. At each layer $l$ and decoding step $t$, causal multi-head attention produces a normalized distribution over the \emph{available} tokens, i.e., the concatenation of $S_c$ context tokens (system, visual, and question tokens) and the $(t-1)$ previously generated tokens. We denote the total attention domain size by
$
S_t = S_c + (t-1).
$
The attention distribution at step $t$ is $\mathbf{a}^{\,l}_t \in [0,1]^{S_t}$ with $\sum_{i=1}^{S_t} a^{\,l}_{t,i} = 1$. Concretely, letting $\mathbf{q}^{\,l}_t \in \mathbb{R}^{d_k}$ be the query at step $t$ and $\mathbf{K}^{l}_t \in \mathbb{R}^{S_t \times d_k}$ be the key matrix formed from all available tokens up to step $t$ at layer $l$, we write
\[
\mathbf{a}^{\,l}_t
= \mathrm{softmax}\!\left(\frac{\mathbf{K}^{l}_t \mathbf{q}^{\,l}_t}{\sqrt{d_k}}\right),
\qquad
\mathbf{a}^{\,l}_t \in [0,1]^{S_t},
\]
which records, for each decoding step and layer, how the decoder allocates attention over \emph{available} tokens.

\textbf{Residual Stream} Following the mathematical interpretation of the residual stream in \citet{elhage2021mathematical}, we view the layerwise hidden states as a residual stream that evolves via additive updates from each module. In our notation, the representation matrix at layer $l$ satisfies
\[
\mathbf{X}^{\,l+1}_{\mathrm{in}}
= \mathbf{X}^{\,l}_{\mathrm{in}} + \Delta \mathbf{X}^{\,l}_{\mathrm{attn}} + \Delta \mathbf{X}^{\,l}_{\mathrm{ffn}},
\qquad
\mathbf{X}^{\,l} \in \mathbb{R}^{S \times H}.
\]


\subsubsection{Theoretical Foundations}
In this subsection, we introduce our foundational assumptions and the mathematical tools used to characterize a representation matrix $\mathbf{X}\in\mathbb{R}^{S\times H}$.

\begin{assumption}[Manifold hypothesis \citep{bengio2013representation}]\label{assump:manifold}
Learned representations often concentrate near a low-dimensional structure embedded in a high-dimensional ambient space. This assumption motivates using low-rank spectral structure as a meaningful proxy for the effective degrees of freedom of $\mathbf{X}$. It also underpins a growing body of representation-centric studies in modern deep models \citep{wang2024exploring,basile2024intrinsic,yuri2025pt,kento2025rep}.
\end{assumption}

\begin{definition}[Frobenius norm]\label{def:frob}
For $\mathbf{X}\in\mathbb{R}^{S\times H}$,
\[
\|\mathbf{X}\|_F = \Big(\sum_{s=1}^{S}\sum_{h=1}^{H} \mathbf{X}_{s,h}^2\Big)^{\frac{1}{2}}
= \sqrt{\mathrm{tr}(\mathbf{X}^\top \mathbf{X})}
= \Big(\sum_{i=1}^{Q}\sigma_i^2\Big)^{\frac{1}{2}}.
\]
It measures the total energy of $\mathbf{X}$ in the ambient space.
\end{definition}
\vspace{-.5 em}
\subsection{Geometric Characterization of Representation Information on Matrix Manifolds (\textbf{RQ1})}
\label{sec:method:info}

In what follows, we progressively answer the three research questions posed in Section~\ref{sec:method:moti}. 
\textbf{RQ1} asks: \emph{How should we quantify the information contained in a representation $\mathbf{X}$?}
To quantify the information in $\mathbf{X}$, we adopt a geometric perspective based on the fixed-rank matrix manifold.

From differential geometry, the set of matrices with rank $r$,
\[
\mathcal{M}_r = \{\mathbf{X}\in\mathbb{R}^{S\times H}:\operatorname{rank}(\mathbf{X})=r\},
\]
admits a smooth Riemannian manifold structure (as an embedded submanifold in the ambient Euclidean space of matrices)~\citep{vandereycken2013lowrank}. 
For any $\mathbf{X}\in\mathcal{M}_r$, a compact singular value decomposition parameterizes $\mathbf{X}$ as $\mathbf{X}=\mathbf{U}\mathbf{\Sigma}\mathbf{V}^\top$ with $r$ positive singular values:

\begin{definition}[Singular Value Decomposition \citep{golub2013matrix}]\label{def:svd}
For any $\mathbf{X} \in \mathbb{R}^{S\times H}$, let $Q=\min\{S,H\}$. The SVD of $\mathbf{X}$ is
\[
\mathbf{X} = \mathbf{U}\mathbf{\Sigma}\mathbf{V}^\top
= \sum_{i=1}^{Q} \sigma_i \mathbf{u}_i \mathbf{v}_i^\top,
\]
where $\mathbf{U}\in\mathbb{R}^{S\times Q}$ and $\mathbf{V}\in\mathbb{R}^{H\times Q}$ have orthonormal columns, $\mathbf{\Sigma}=\mathrm{diag}(\sigma_1,\ldots,\sigma_Q)$ with $\sigma_1\ge \cdots \ge \sigma_Q \ge 0$, and $(\mathbf{u}_i,\mathbf{v}_i)$ are the left and right singular vectors.
\end{definition}

Under this parameterization, $\mathbf{X}$ is described by three geometric objects:
\begin{itemize}[topsep=1pt, itemsep=1pt, parsep=0pt, partopsep=0pt,leftmargin=0.8em]
  \item \textbf{Left singular subspace} $\mathcal{C}(\mathbf{X})=\operatorname{span}(\mathbf{U})\in \operatorname{Gr}(r,S)$, capturing association structure in the token space;
  \item \textbf{Right singular subspace} $\mathcal{R}(\mathbf{X})=\operatorname{span}(\mathbf{V})\in \operatorname{Gr}(r,H)$, capturing semantic directions in the feature space;
  \item \textbf{Singular spectrum} $\mathbf{\Sigma}\in\mathbb{R}_+^{r}$, capturing the energy distribution across principal directions.
\end{itemize}
Here $\operatorname{Gr}(r,n)$ denotes the Grassmann manifold, the set of all $r$-dimensional linear subspaces of $\mathbb{R}^n$~\citep{absil2008optimization}.

Motivated by this geometry, we formalize the information contained in $\mathbf{X}$ as a pair
\[
\mathcal{I}(\mathbf{X}) = \big(\mathcal{S}_{\mathbf{X}},\mathcal{D}_{\mathbf{X}}\big).
\]
Here $\mathcal{S}_{\mathbf{X}}$ denotes the \emph{information complexity}, determined by the singular spectrum, and $\mathcal{D}_{\mathbf{X}}$ denotes the \emph{information support}, determined by the left and right subspaces. We detail these two components next.

\subsubsection{Information complexity (Spectrum $\mathcal{S}_{\mathbf{X}}$)}

Based on Theorem \ref{thm:eym} \citep{eckart1936approximation}, the singular values determine the optimal rank-$k$ approximation error and therefore quantify how much of $\mathbf{X}$ can be captured by its leading principal directions. We thus summarize the concentration versus spread of the singular spectrum into an effective dimensionality using \emph{effective rank} (eRank):
\begin{definition}[Rank and Effective rank \citep{roy2007effective}]\label{def:rank}
For $\mathbf{X}$ with singular values $\{\sigma_i\}_{i=1}^{Q}$, the rank is
\[
\mathrm{rank}(\mathbf{X}) = \big|\{i:\sigma_i>0\}\big|.
\]
Let $p_i = \sigma_i\big/\sum\sigma$ be the normalized singular spectrum. We define the Spectrum $\mathcal{S}_{\mathbf{X}}$ of the matrix as its effective rank:
\[
\mathcal{S}_{\mathbf{X}} = \mathrm{eRank}(\mathbf{X}) = \exp\!\Big(-\sum_{i=1}^{Q} p_i \log p_i\Big).
\]
\end{definition}
This quantity corresponds to the \emph{scale} component in the SVD-based representation, namely the singular spectrum $\mathbf{\Sigma}$.

\subsubsection{Information support (Support $\mathcal{D}_{\mathbf{X}}$)}
This component corresponds to the Grassmann points $\mathcal{C}(\mathbf{X})$ and $\mathcal{R}(\mathbf{X})$ in the manifold parameterization.
We view ``semantics'' as the linear subspaces occupied by the data in the ambient vector spaces; under the manifold hypothesis, high-dimensional semantic structure often concentrates near low-dimensional subspaces.
Concretely, the column space $\mathcal{C}(\mathbf{X})$ (spanned by $\mathbf{U}$) specifies what semantic categories the layer representation can express, while the row space $\mathcal{R}(\mathbf{X})$ (spanned by $\mathbf{V}$) specifies linear dependency structure among tokens.
In practice, we parameterize these Grassmann points using the orthonormal bases from SVD via the associated orthogonal projectors:
\[
\mathbf{P}_{\mathcal{C}(\mathbf{X})}=\mathbf{U}\mathbf{U}^\top,
\;
\mathbf{P}_{\mathcal{R}(\mathbf{X})}=\mathbf{V}\mathbf{V}^\top,
\;
\mathcal{D}_{\mathbf{X}}=(\mathbf{P}_{\mathcal{C}(\mathbf{X})}, \mathbf{P}_{\mathcal{R}(\mathbf{X})})
\]
which uniquely determine the supporting subspaces of $\mathbf{X}$.

\textbf{Discussion.} We have thus answered \textbf{RQ1} by formalizing the information contained in a representation $\mathbf{X}$ as two complementary components: the singular spectrum $\mathbf{\Sigma}$ encodes how energy is distributed across principal directions and thereby quantifies information complexity $\mathcal{S}_{\mathbf{X}}$, while the orthonormal factors $(\mathbf{U},\mathbf{V})$ determine the supporting subspaces $\mathcal{C}(\mathbf{X})=\mathrm{span}(\mathbf{U})$ and $\mathcal{R}(\mathbf{X})=\mathrm{span}(\mathbf{V})$, fixing the geometric orientation of the representation in token and feature spaces and capturing structured semantics $\mathcal{D}_{\mathbf{X}}$.

\subsection{Quantifying the Contribution of an Update $\Delta\mathbf{X}$ (\textbf{RQ2})}
\label{sec:method:rq2}
In Section~\ref{sec:method:info}, we answered \textbf{RQ1} by defining the information in a representation as $\mathcal{I}(\mathbf{X})=(\mathcal{S}_{\mathbf{X}},\mathcal{D}_{\mathbf{X}})$. We now address \textbf{RQ2}: \emph{How should we quantify what $\Delta\mathbf{X}$ contributes to $\mathbf{X}$?}
Given an additive update $\mathbf{X}'=\mathbf{X}+\Delta\mathbf{X}$, its effect on $\mathbf{X}$ admits three complementary and collectively exhaustive categories under our decomposition:
\begin{enumerate}[topsep=1pt, itemsep=1pt, parsep=0pt, partopsep=0pt,leftmargin=1.0em]
    \item \textbf{Spectrum change} (change in $\mathcal{S}_{\mathbf{X}}$): $\Delta\mathbf{X}$ reshapes the singular spectrum, inducing compression or expansion of the effective dimensionality, which reflects how information mass is redistributed across principal directions.
    \item \textbf{Support change} (change in $\mathcal{D}_{\mathbf{X}}$): $\Delta\mathbf{X}$ perturbs the column and row subspaces, introducing or removing semantic support directions, namely a geometric shift in what the representation can express and how tokens linearly depend on one another.
    \item \textbf{Internal interaction} (no external support): $\Delta\mathbf{X}$ remains within the existing support and acts by \emph{reconfiguration}, namely reorganizing and reallocating information already present in $\mathbf{X}$ without injecting new support directions.
\end{enumerate}
The first two categories reflect external information injection that changes complexity or support. The third captures \emph{reconfiguration}, since it reflects internal redistribution within the existing information support. We next define measures for external information injection and reconfiguration.

\subsubsection{Measuring External Information Injection}\label{sec:method:innovation}

\textbf{Spectrum change.}
We quantify the spectrum change by the eRank variation, normalized to lie in $[0,1]$.
\[
\Delta \mathcal{S}(\mathbf{X}\mid \mathbf{X}')
=
\frac{\big|\mathrm{eRank}(\mathbf{X}')-\mathrm{eRank}(\mathbf{X})\big|}{\min\{S,H\}}.
\]

\textbf{Support innovation.}
To measure how much new support is introduced by $\Delta\mathbf{X}$, we use the innovation view from least squares, where innovation is the residual after projecting onto a reference subspace:
\begin{definition}[Subspace Innovation]\label{def:innovation}
Let $\mathcal{U}\subseteq\mathbb{R}^{d}$ be a linear subspace with orthogonal projector $\mathbf{P}_{\mathcal{U}}$. For an observation $\mathbf{y}\in\mathbb{R}^{d}$, the least-squares prediction in $\mathcal{U}$ is $\hat{\mathbf{y}} = \mathbf{P}_{\mathcal{U}}\mathbf{y}$. The innovation is the residual  \citep{hassibi2000linear}
\[
\tilde{\mathbf{y}} = \mathbf{y}-\hat{\mathbf{y}} = (\mathbf{I}-\mathbf{P}_{\mathcal{U}})\mathbf{y}.
\]
\end{definition}
Analogously, we define the \emph{support innovation} of the update $\Delta\mathbf{X}$ relative to $\mathbf{X}$ as the energy that lies in the orthogonal complements of the column and row spaces of $\mathbf{X}$. Let $\mathbf{P}_{\mathcal{C}(\mathbf{X})}$ and $\mathbf{P}_{\mathcal{R}(\mathbf{X})}$ be the orthogonal projectors onto $\mathcal{C}(\mathbf{X})$ and $\mathcal{R}(\mathbf{X})$. We define
\[
\Delta \mathcal{D}(\mathbf{X}\mid \mathbf{X}')
=
\frac{\big\|(\mathbf{I}-\mathbf{P}_{\mathcal{C}(\mathbf{X})})\mathbf{X}'\big\|_F
+
\big\|\mathbf{X}'(\mathbf{I}-\mathbf{P}_{\mathcal{R}(\mathbf{X})})\big\|_F}{2 \times \|\mathbf{X}'\|_F}.
\]

\paragraph{Two-dimensional innovation vector.}
The two terms above capture complementary channels of external information injection. Spectrum change $\Delta\mathcal{S}$ measures variation in effective dimensionality, while support innovation $\Delta\mathcal{D}$ measures novelty in the column and row subspaces. We therefore first represent innovation as a two-dimensional quantity:
\[
\Delta\mathcal{I}(\mathbf{X}\mid\mathbf{X}')
=
\big(
\Delta\mathcal{S}(\mathbf{X}\mid\mathbf{X}'),
\Delta\mathcal{D}(\mathbf{X}\mid\mathbf{X}')
\big).
\]
Using either component alone may miss complementary cases, such as subspace change with little spectral variation. Since both components are normalized to comparable ranges, we aggregate them into a scalar summary score, defined next.

\begin{definition}[\textbf{Representation Information Discrepancy (RID)}]\label{def:rid}
Given two representation matrices $\mathbf{X},\mathbf{X}'\in\mathbb{R}^{S\times H}$, we define the \emph{Representation Information Discrepancy} as the sum of the spectrum change and the support innovation:
\[
\mathrm{RID}(\mathbf{X}\mid \mathbf{X}')
=
\Delta \mathcal{S}(\mathbf{X}\mid \mathbf{X}') \;+\; \Delta \mathcal{D}(\mathbf{X}\mid \mathbf{X}').
\]
RID measures how a representation changes in spectral complexity and subspace novelty, and satisfies $\mathrm{RID}\in[0,2]$ (Lemma~\ref{lem:rid_range}). Since positional encoding and parameterization effects make $\mathrm{RID}$ rarely exactly zero in practice, we introduce a tolerance $\epsilon>0$ and treat $\mathbf{X}'$ as information-preserving relative to $\mathbf{X}$ whenever $\mathrm{RID}(\mathbf{X}\mid\mathbf{X}')\approx \epsilon$; concretely, we set
$
\epsilon_{\text{RoPE}} \;=\; \mathrm{RID}\!\Big(\mathbf{X}^{\text{(RoPE)}}_{\mathrm{in}} \;\big\vert\; \mathbf{X}^{\text{(no-RoPE)}}_{\mathrm{in}}\Big),
$
which calibrates $\epsilon$ to the intrinsic discrepancy induced by Rotary Positional Encoding (RoPE) \cite{su2024roformer}.
\end{definition}

\subsubsection{Measuring Reconfiguration}\label{sec:method:reconf}
Another effect of $\Delta\mathbf{X}$ is \emph{reconfiguration}, namely redistributing information within the existing support. We measure this internal redistribution via a token-to-token mixing entropy.



\begin{definition}[Token Mixing Entropy (TME)]\label{def:tme}
Given a hidden-state matrix $\mathbf{X}\in\mathbb{R}^{S\times H}$ with row vectors $\mathbf{x}_t\in\mathbb{R}^{H}$, define $\tilde{\mathbf{x}}_t = \mathbf{x}_t/\|\mathbf{x}_t\|_2$ as the unit direction vector. We form a token-to-token mixing distribution by mapping pairwise token similarities to $[0,1]$ and then row-normalizing
\[
P_{t,j}
=
\frac{\frac{\tilde{\mathbf{x}}_t^\top \tilde{\mathbf{x}}_j + 1}{2}}
{\sum_{k=1}^{S}\frac{\tilde{\mathbf{x}}_t^\top \tilde{\mathbf{x}}_k + 1}{2}},
\qquad t,j\in\{1,\ldots,S\}.
\]
The Token Mixing Entropy is the average Shannon entropy of these distributions:
\[
\mathrm{TME}(\mathbf{X})
=
-\frac{1}{S}\sum_{t=1}^{S}\sum_{j=1}^{S} P_{t,j}\log P_{t,j}.
\]
$\mathrm{TME}(\mathbf{X})$ provides an operational measure of token-level interaction by summarizing how broadly each token mixes with the rest of the sequence. It constructs a token-to-token mixing distribution from pairwise similarity and quantifies its uncertainty via entropy, so \textbf{larger} $\mathrm{TME}$ indicates more diffuse, globally shared interactions, whereas smaller $\mathrm{TME}$ indicates more concentrated, selective mixing.
\end{definition}

\begin{definition}[\textbf{Mixing Information Gain (MixIG)}]\label{def:mixig}
For an updated representation $\mathbf{X}'=\mathbf{X}+\Delta\mathbf{X}$, we define the mixing information gain as the change in token mixing entropy:
\[
\mathrm{MixIG}(\mathbf{X}\mid \mathbf{X}')
=
\mathrm{TME}(\mathbf{X}')-\mathrm{TME}(\mathbf{X}).
\]
This quantity captures how strongly the update increases or decreases token-to-token mixing, and thus serves as an operational measure of \emph{reconfiguration} within the existing information support.
\end{definition}

\textbf{Discussion.}
In this section, we answer \textbf{RQ2} with two complementary metrics: \textsc{RID} and \textsc{MixIG}. \textsc{RID} quantifies \emph{innovation} by measuring how $\Delta\mathbf{X}$ changes the representation through spectral complexity shifts and support novelty, indicating external information injection beyond the current subspace. \textsc{MixIG} quantifies \emph{reconfiguration} by measuring how $\Delta\mathbf{X}$ reshapes token to token mixing within the existing support, capturing internal redistribution of information without introducing new support directions.

\section{Redundancy and Misallocation in LVLM Visual Attention (RQ3)}
\label{sec:exp}

In this section, we build on our theoretical framework to answer \textbf{RQ3}: \emph{How can we use $\Delta\mathbf{X}$ to analyze and contrast the functional roles of different modules?} Through experiments, we uncover a common pathology in Transformer-based LVLMs: \textbf{models can get lost in attention}. We first describe the experimental setups in Section~\ref{sec:exp:setup}. Then, in Section~\ref{sec:method:func}, we use \textsc{RID} and \textsc{MixIG} to show that different modules exhibit orthogonal functional roles, complementing prior statistically grounded interpretability studies \citep{kang2025see,geva2021transformer}. Finally, in Section~\ref{sec:exp:pre}, we replace attention scores with predefined values, and the results indicate substantial redundancy in existing LVLM attention.

\subsection{Experimental Setups}
\label{sec:exp:setup}

\textbf{Model settings.}
We evaluate 15 open-source LVLM variants spanning three mainstream architectures. Specifically, we consider Qwen-family models (\texttt{Qwen-2.5-VL} \citep{qwen2.5-VL}, \texttt{CoF} \citep{wei2022chain}, \texttt{Reverse} \citep{wu2025generate}, \texttt{MM-Eureka} \citep{meng2025mmeureka}, \texttt{Orsta} \citep{ma2025one}, \texttt{Ocean-R1} \citep{ming2025oceanr1}), LLaVA-1.5-family models (\texttt{LLaVA-1.5} \citep{liu2024improved}, \texttt{Yi-VL} \citep{ai2024yi}), and LLaVA-NeXT-family models (\texttt{LLaVA-OneVision} \citep{li2024llava}, \texttt{Mistral-1.6} and \texttt{Vicuna-1.6} \citep{liu2024llavanext}).

\textbf{Tasks and benchmarks.}
Our experiments are conducted on a broad suite of multimodal benchmarks, including POPE \citep{li2023pope}, 3DSRBench \citep{ma20253dsrbench}, RealWorldQA \citep{visheratin_realworldqa}, MMMU \citep{yue2023mmmu}, VMC-Bench \citep{Zhang_2025_CVPR}, MathVista \citep{lu2023mathvista}, and HallusionBench \citep{Guan_2024_CVPR}. Together, these benchmarks evaluate LVLM capabilities from basic visual perception to advanced multimodal reasoning. Details on the benchmarks are provided in the Appendix \ref{sec:data:details}.

\subsection{Interpreting the Functional Roles of Attention and FFN}
\label{sec:method:func}

\begin{figure}[ht]
  \vskip 0.2in
  \begin{center}
    \centerline{\includegraphics[width=\columnwidth]{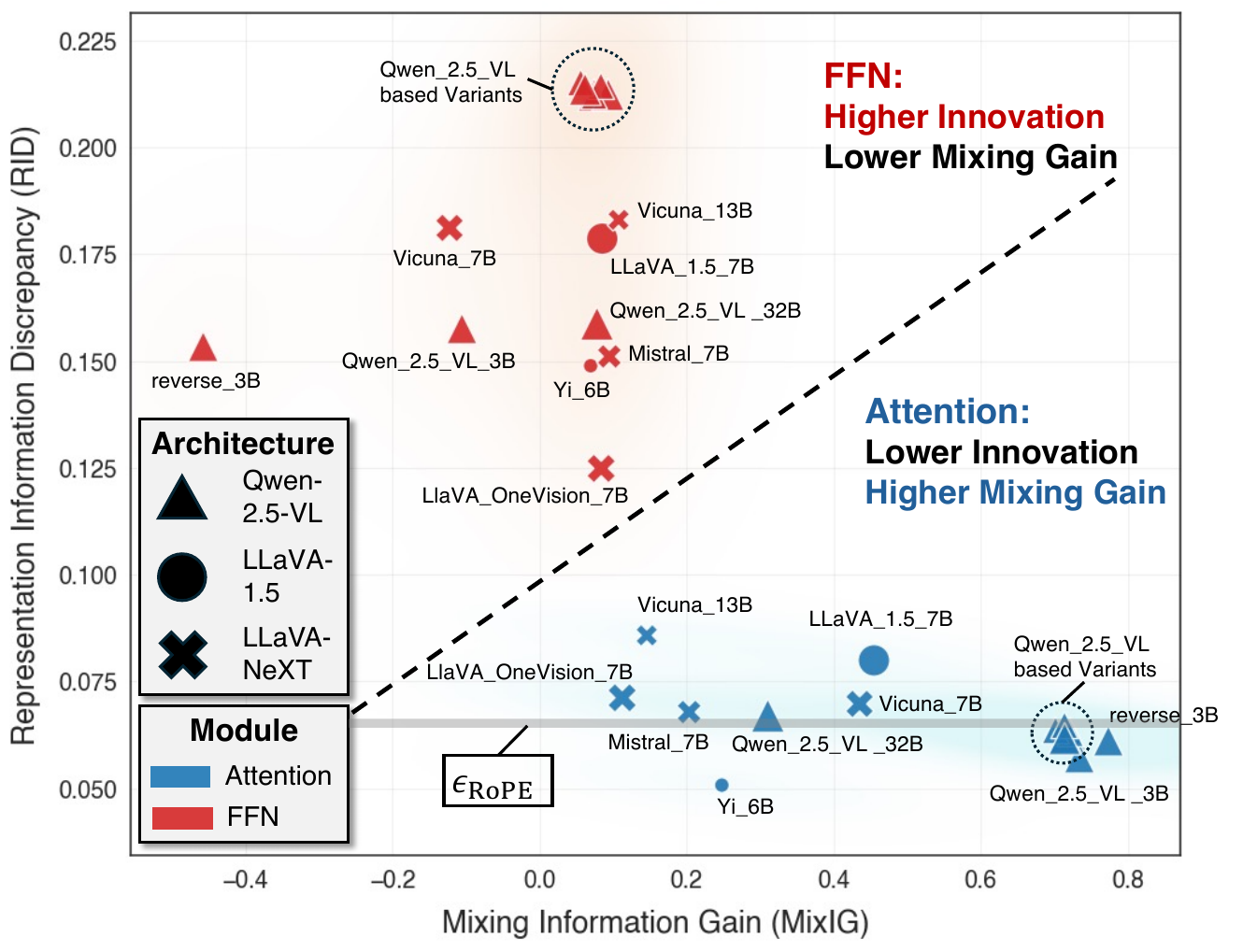}}
    \caption{
    Model-wise $\mathrm{RID}$ and $\mathrm{MixIG}$ for Attention and FFN. Across architectures and training variants, a clear and consistent separation emerges between attention and FFN contributions, indicating that our framework captures an intrinsic functional distinction between the two submodules. Specifically, $\epsilon_{\text{RoPE}}=0.062$.
    }
    \label{fig:modelwise}
  \end{center}
\end{figure}

To systematically dissect the information dynamics within the residual stream, 
we track the evolution of $\mathrm{RID}$ and $\mathrm{MixIG}$ across all layers $l$, using a random sample of 1000 instances from each dataset.
We design three comparative settings to isolate the contributions of learned architectural components versus stochastic interference:

\begin{enumerate}[topsep=1pt, itemsep=1pt, parsep=0pt, partopsep=0pt,leftmargin=1.0em]
    \item \textbf{Stochastic Baselines ($\mathbf{X}^{l}_{\mathrm{noise}}$):} We introduce two randomization strategies to validate metric sensitivity and isolate learned functional properties:
(1) \textbf{Noise} $\mathbf{\Delta}$, where the attention update is replaced by Gaussian noise matching the empirical moments of $\Delta\mathbf{X}_{\mathrm{attn}}$, serving as a negative control to verify the detection of unstructured, off-manifold perturbations;
(2) \textbf{Noise} $\mathbf{QKV}$, where learned weight matrices are replaced by Gaussian initializations, serving to demonstrate that the subspace-preserving nature of attention is a learned behavior, as unoptimized linear transformations would otherwise significantly perturb the feature space. In both cases, we match the noise mean to that of $\Delta\mathbf{X}_{\mathrm{attn}}$ (Theorem~\ref{thm:anme}).
    
    \item \textbf{Attention Contribution:} We measure the transition from input to post-attention states via $\mathrm{RID}(\mathbf{X}^{l}_{\mathrm{in}}\mid \mathbf{X}^{l}_{\mathrm{attn}})$ and $\mathrm{MixIG}(\mathbf{X}^{l}_{\mathrm{in}}\mid \mathbf{X}^{l}_{\mathrm{attn}})$.
    
    \item \textbf{FFN Contribution:} We measure the transition from post-attention to post-FFN states via $\mathrm{RID}(\mathbf{X}^{l}_{\mathrm{attn}}\mid \mathbf{X}^{l}_{\mathrm{ffn}})$ and $\mathrm{MixIG}(\mathbf{X}^{l}_{\mathrm{attn}}\mid \mathbf{X}^{l}_{\mathrm{ffn}})$.
\end{enumerate}

The aggregated statistics are shown in Table~\ref{tab:summary} and Figure~\ref{fig:modelwise}, while layer-wise trajectories are illustrated in Figure~\ref{fig:layerwise}.

%

\begin{table}[h]
\centering
\caption{Module-wise \textsc{RID} and \textsc{MixIG} with qualitative signatures.}
\label{tab:summary}
\resizebox{\columnwidth}{!}{
\renewcommand{\arraystretch}{1.1}
\setlength{\tabcolsep}{7pt}
\begin{tabular}{c c c l}
\arrayrulecolor{black}\hline
\textbf{Module} & \textbf{RID} & \textbf{MixIG} & \textbf{Characteristic} \\
\arrayrulecolor{black}\hline
\arrayrulecolor{black!35}
\rowcolor{gray!10}
Noise $\Delta$ & 0.61 & -0.80 & \\ 
\cline{1-3} 
\rowcolor{gray!10}
Noise $\mathbf{QKV}$ & 0.44 & -0.50 & \multirow{-2}{*}{\makecell[l]{Very high RID\\Negative MixIG}} \\
\hline
\rowcolor{white}
Attention & 0.06 & \textbf{0.61} & Low RID, High MixIG \\
\hline
\rowcolor{white}
Feed-Forward & \textbf{0.21} & 0.02 & High RID, Low MixIG \\
\arrayrulecolor{black}\hline
\end{tabular}
}
\end{table}

\arrayrulecolor{black} 

\begin{figure}[ht]
  \vskip 0.2in
  \begin{center}
    \centerline{\includegraphics[width=\columnwidth]{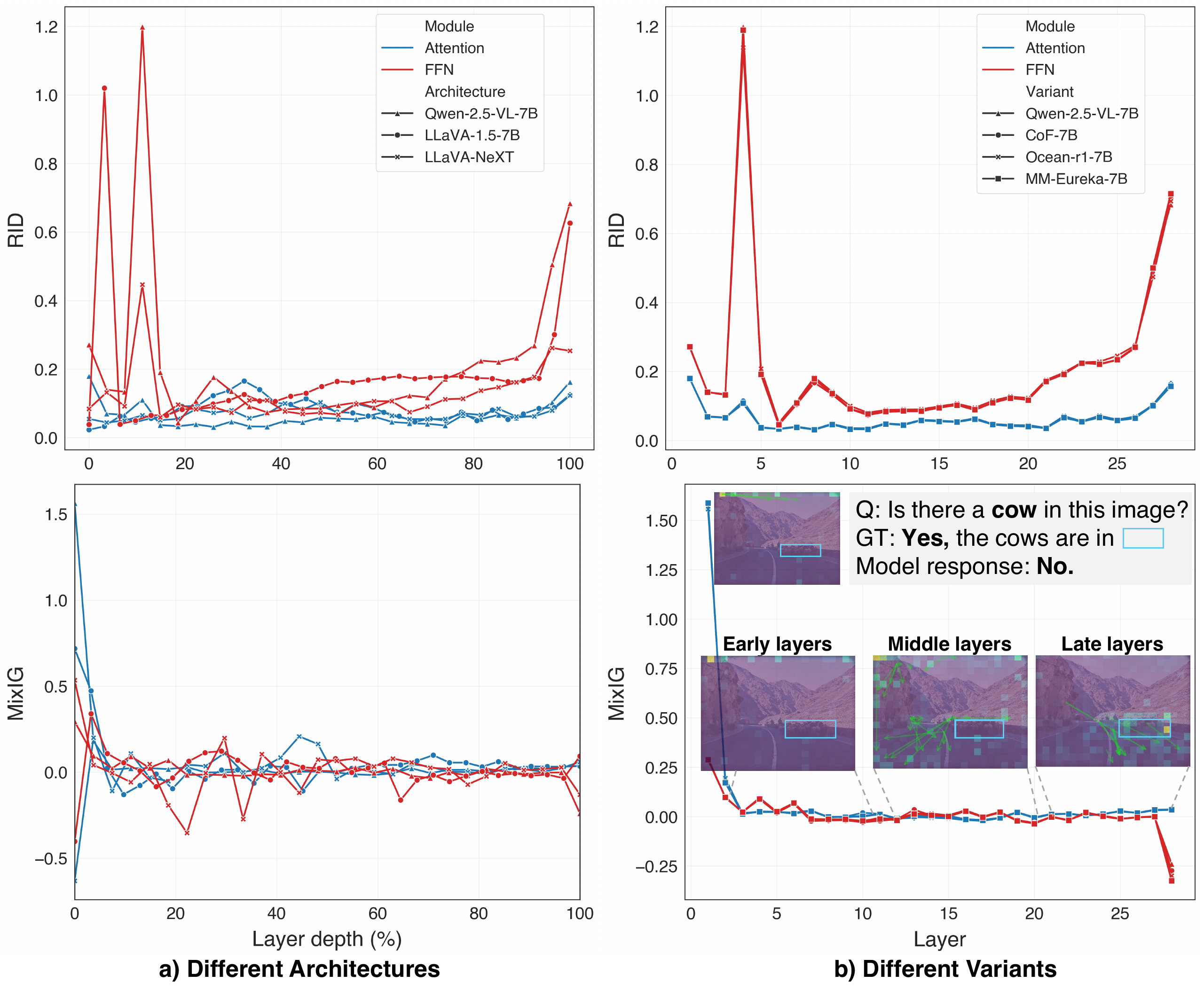}}
    \caption{
    Layer-wise $\mathrm{RID}$ and $\mathrm{MixIG}$ for Attention and FFN. More sample visualizations are provided in the Figures~\ref{fig:case1}--\ref{fig:case6}.
    }
    \label{fig:layerwise}
  \end{center}
\end{figure}

\begin{table*}[ht]
\centering
\caption{Benchmark results under different SAP modes. We \textbf{bold} the best results and \underline{underline} the runner-ups \emph{within each model}.}
\label{tab:intervention_bench}
\renewcommand\tabcolsep{7pt}
\renewcommand\arraystretch{1.12}

\resizebox{1\linewidth}{!}{
\begin{tabular}{l|c|ccccccc}
\Xhline{1.2pt}
\rowcolor{CadetBlue!18}
\textbf{Model / Variant} & \textbf{Affected Layers} & \textbf{POPE} & \textbf{RWQA} & \textbf{3dSRBench} & \textbf{MMMU} & \textbf{VMCBench} & \textbf{HallusionBench} & \textbf{MathVista} \\
\Xhline{1.2pt}

\texttt{Qwen-2.5-VL-3B} & /   & 86.13 & 59.35 & 53.46 & 47.78 & 72.31 & 66.97 & 61.5 \\
\lightsep
\rowcolor{AlignedBlue}
\quad + Vis. Attn. & 
               & \textbf{87.58} & \underline{61.38} & \underline{53.94} & \textbf{48.29} & \textbf{72.67} & 68.66 & \underline{61.6} \\
\rowcolor{ComplPurple}
\quad + Patch Comp.  &     
               & \underline{87.47} & \textbf{61.62} & \textbf{54.14} & \underline{47.88} & 72.59 & \textbf{69.19} & \textbf{61.7} \\
\rowcolor{BorderGreen}
\quad  + Noise  &    \multirow[c]{-3}{*}{[1, 27]} 
               & 87.40 & 60.52 & 53.85 & \textbf{48.29} & \underline{72.66} & \underline{69.09} & \underline{61.6} \\
\Xhline{1.0pt}

\texttt{Qwen-2.5-VL-7B} & /   & 86.54 & 65.75 & 55.63 & \textbf{51.77} & 74.34 & 69.19 & \textbf{63.3} \\
\lightsep
\rowcolor{AlignedBlue}
\quad + Vis. Attn. & 
               & \underline{87.62} & \underline{66.14} & \underline{56.60} & 51.18 & \underline{74.77} & \underline{70.98} & \underline{63.1} \\
\rowcolor{ComplPurple}
\quad  + Patch Comp. &     
               & \textbf{87.73} & \textbf{66.54} & \textbf{56.74} & 51.32 & \textbf{74.80} & \textbf{71.40} & \underline{63.1} \\
\rowcolor{BorderGreen}
\quad + Noise   &   \multirow[c]{-3}{*}{[1, 27]}  
               & 87.51 & \textbf{66.54} & 56.56 & \underline{51.76} & 74.76 & 70.35 & 62.9 \\
\Xhline{1.0pt}

\texttt{LLaVA-1.5-7B}   & /     & 74.38 & 47.71 & 47.53 & 34.12 & 48.71 & 41.63 & 21.9 \\
\lightsep
\rowcolor{AlignedBlue}
\quad + Vis. Attn. & 
               & \textbf{75.79} & \underline{50.20} & 48.65 & 34.71 & \underline{52.23} & \textbf{44.29} & \underline{23.2} \\
\rowcolor{ComplPurple}
\quad + Patch Comp.  &       
               & \underline{75.30} & \textbf{50.85} & \textbf{48.96} & \underline{35.18} & \textbf{52.29} & 42.42 & \textbf{23.6} \\
\rowcolor{BorderGreen}
\quad + Noise  &  \multirow[c]{-3}{*}{[18, 23]}     
               & 75.02 & 47.58 & \underline{48.81} & \textbf{35.23} & 50.70 & \underline{42.69} & 22.9 \\
\Xhline{1.0pt}

\texttt{LLaVA-OneVision-7B} & /   & 86.21 & 56.73 & 55.54    & \underline{41.51} & 66.79 & 46.94 & \underline{63.7} \\
\lightsep
\rowcolor{ComplPurple}
\quad    + Patch Comp.    & 
                   & \textbf{87.78} & \textbf{60.26} & \textbf{57.22} & \textbf{42.76} & \textbf{68.79} & \textbf{47.48} & \underline{63.7} \\
\rowcolor{BorderGreen}
\quad + Noise       &  \multirow[c]{-2}{*}{[21, 27]}   
                   & \underline{87.28} & \underline{59.09} & \underline{56.72} & 40.99 & \underline{67.80} & \underline{47.03} & \textbf{64.3} \\
\Xhline{1.2pt}

\end{tabular}
}
\end{table*}

Our observations are as follows:


\textbf{Obs} \ding{182}. \textbf{Metric Discriminability and Subspace Sensitivity.}
Table~\ref{tab:summary} validates our metrics through stochastic baselines. Noise $\Delta$ and Noise $\mathbf{QKV}$ serve as negative controls for testing whether \textsc{RID} and \textsc{MixIG} can distinguish structured module updates from unstructured perturbations. The substantially higher RID and negative MixIG of both baselines show that unstructured perturbations are correctly identified as off-subspace disruptions with reduced token mixing, confirming that the low-RID, positive-MixIG profile of attention reflects a learned structured update rather than a metric artifact.

\textbf{Obs \ding{183}. The Orthogonal Roles of Attention and FFN.}
Figure~\ref{fig:modelwise} shows a consistent separation between attention and FFN across 15 LVLM variants. Attention updates exhibit negligible innovation (on the order of $\epsilon_{\text{RoPE}}$) but strong reconfiguration, acting as a \emph{subspace-preserving operator}. In contrast, FFN updates exhibit substantial innovation with weak reconfiguration, acting as a \emph{subspace-expanding operator}. Together, these results quantify a clear division of labor: attention primarily \emph{contextualizes} existing information via rearrangement, whereas FFNs primarily \emph{compute} new semantic features via subspace expansion.


\textbf{Obs \ding{184}. Misallocation in visual attention.}
The layer-wise analysis in Figure~\ref{fig:layerwise} suggests a heterogeneous role of attention across depth: while some layers exhibit pronounced reconfiguration (e.g., Layer~0 and layers around $40\%$ depth), cross-token interactions remain sparse in most layers. Motivated by this pattern, we further visualize attention-mediated cross-patch interactions in Figure~\ref{fig:layerwise}(b) by linking patch pairs whose query--key score $\geq 0.1$. We model patch interactions as a graph and measure the degree share of question-relevant regions: this share is substantially lower for incorrect samples ($4.2\%$) than for correct ones ($13.1\%$), exposing a systematic \emph{misallocation} of visual attention in current LVLM decoders. We further discuss this analysis in Appendix~\ref{app:more_vis}.

\textbf{Summary.}
In this section, we validate the discriminability of our metrics (\textbf{Obs~\ding{182}}) and confirm a robust module-level functional separation across diverse LVLM variants (\textbf{Obs~\ding{183}}). We further find that attention often fails to allocate and reorganize information around question-relevant visual evidence (\textbf{Obs~\ding{184}}). This naturally raises a follow-up question: \textit{if attention scores exhibit such misallocation, are they largely redundant and replaceable?} We answer this question in the next section via targeted interventions.


\subsection{Replacing Attention Scores with Priors}
\label{sec:exp:pre}

\begin{figure}[ht]
  \begin{center}
    \centerline{\includegraphics[width=\columnwidth]{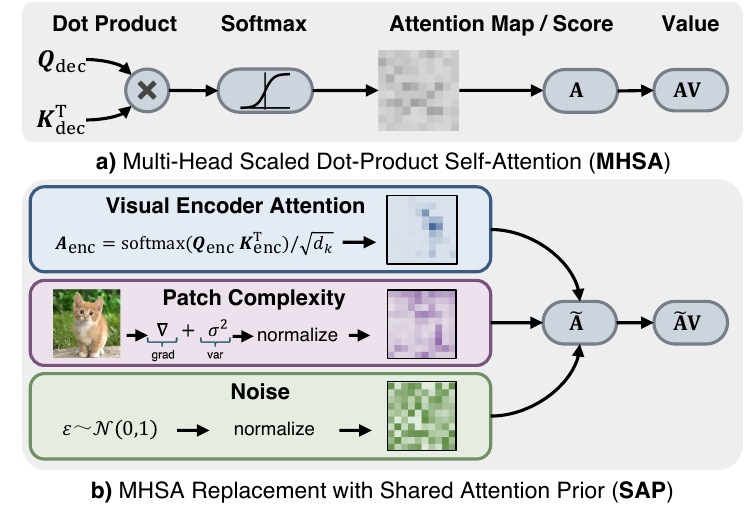}}
    \caption{MHSA Replacement with Shared Attention Prior. \textbf{Causal masking is still applied after the replacement.}
    }
    \label{fig:attnrep}
  \end{center}
\end{figure}

To further validate that a substantial portion of LVLM attention computation is redundant, we intervene on the decoder by replacing attention scores in selected layers with shared attention prior (SAP). As illustrated in Figure~\ref{fig:attnrep}, we consider three replacement modes:
\emph{(i) Visual-encoder attention}, which injects attention maps derived from the visual encoder;
\emph{(ii) Patch complexity}, which uses a precomputed patch-wise complexity prior based on within-patch color variance and edge-gradient magnitude;
and \emph{(iii) Noise}, which substitutes scores with Gaussian noise. \textbf{Details of SAP experiments} are provided in Appendix~\ref{app:det:htap}.


Table~\ref{tab:intervention_bench} reports the SAP replacement results on three backbone families (\texttt{Qwen-2.5-VL}, \texttt{LLaVA-1.5}, and \texttt{LLaVA-NeXT}). \textbf{Detailed ablations on affected layers and heads}, as well as experiments on larger models and more variants, are provided in Appendix~\ref{app:more_results}.


\textbf{Obs \ding{185}. Substantial redundancy in LVLM visual attention.}
Across models and benchmarks (Table~\ref{tab:intervention_bench}), replacing decoder attention scores with these predefined patterns does not degrade performance and can even yield improvements. This indicates that, for current LVLMs, a large fraction of visual-attention scoring is not functionally necessary, revealing substantial redundancy in decoder visual attention. This observation is consistent with recent visual token pruning works \citep{wen2025token,zhang2025beyond}.

\section{Discussion and Conclusion}

We propose a unified theoretical framework for assessing how residual-stream updates shape representations in large models. Applying it to LVLMs reveals a consistent module-level functional separation, where attention primarily supports token-level reconfiguration while FFNs drive innovation, and further diagnoses a pervasive failure mode in current decoders: visual attention often misallocates interaction away from question-relevant evidence. Motivated by this deficiency, we conduct a \textbf{proof-of-concept} intervention by replacing attention scores in selected layers with simple predefined priors, and observe little to no degradation in capability, suggesting substantial redundancy in learned scoring. Beyond these specific findings, our framework and empirical protocol offer a general tool for evaluating residual-update mechanisms across model families and motivate targeted attention-centric optimization.

In conclusion, our framework turns LVLM residual updates into measurable innovation--reconfiguration dynamics and provides evidence that current Transformer-based LVLMs can \emph{get lost in attention}. Future work includes extending the analysis to training-time dynamics and leveraging the observed redundancy to design more efficient attention mechanisms or regularizers that preserve useful mixing while reducing unnecessary scoring.

\section*{Impact Statement}
This paper presents work whose goal is to advance the field of Large Vision--Language Model Interpretability. There are many potential societal consequences of our work, none
which we feel must be specifically highlighted here.



\bibliography{example_paper}
\bibliographystyle{icml2026}

\newpage
\appendix
\onecolumn
\section{Notations}\label{app:nots}
We summarize the notation used throughout this paper in Table~\ref{tab:notation}.
\begin{table}[htbp]
\centering
\caption{{Notations.}}
\label{tab:notation}
\resizebox{\textwidth}{!}{%
\begin{tabular}{ll}
\toprule
Notation & Description \\
\midrule
$\mathbf{X}\in\mathbb{R}^{S\times H}$ 
& Hidden-state / residual-stream representation matrix with token length $S$ and hidden size $H$ \\
$\mathbf{X}_{\text{new}},\,\mathbf{X}_{\text{old}},\,\Delta\mathbf{X}$ 
& Updated representation, pre-update representation, and the additive residual update, $\mathbf{X}_{\text{new}}=\mathbf{X}_{\text{old}}+\Delta\mathbf{X}$ \\
$\mathbf{X}^{\,l}_{\mathrm{in}},\,\mathbf{X}^{\,l}_{\mathrm{attn}},\,\mathbf{X}^{\,l}_{\mathrm{ffn}}$ 
& Layer-$l$ residual-stream states: layer input, post-attention state, and post-FFN state \\
$\Delta \mathbf{X}^{\,l}_{\mathrm{attn}},\,\Delta \mathbf{X}^{\,l}_{\mathrm{ffn}}$ 
& Module-wise residual updates at layer $l$: $\Delta \mathbf{X}^{\,l}_{\mathrm{attn}}=\mathbf{X}^{\,l}_{\mathrm{attn}}-\mathbf{X}^{\,l}_{\mathrm{in}}$, $\Delta \mathbf{X}^{\,l}_{\mathrm{ffn}}=\mathbf{X}^{\,l}_{\mathrm{ffn}}-\mathbf{X}^{\,l}_{\mathrm{attn}}$ \\
$\mathbf{X}=\mathbf{U}\mathbf{\Sigma}\mathbf{V}^{\top}$ 
& Singular value decomposition (SVD) of $\mathbf{X}$ with orthonormal factors $\mathbf{U},\mathbf{V}$ and singular spectrum $\mathbf{\Sigma}$ \\
$\mathcal{I}(\mathbf{X})=\big(\mathcal{S}_{\mathbf{X}},\,\mathcal{D}_{\mathbf{X}}\big)$ 
& Representation information, decomposed into spectrum complexity $\mathcal{S}_{\mathbf{X}}$ and support $\mathcal{D}_{\mathbf{X}}$ \\
$\mathcal{C}(\mathbf{X}),\,\mathcal{R}(\mathbf{X})$ 
& Column space and row space of $\mathbf{X}$ (Grassmann points) \\
$\mathrm{span}(\mathbf{U}),\,\mathrm{span}(\mathbf{V})$ 
& Left and right singular subspaces induced by SVD factors $\mathbf{U}$ and $\mathbf{V}$ \\
$\mathbf{P}_{\mathcal{U}}$ 
& Orthogonal projector onto a subspace $\mathcal{U}$ \\
$\Delta \mathcal{S}(\mathbf{X}\mid \mathbf{X}'),\,\Delta \mathcal{D}(\mathbf{X}\mid \mathbf{X}')$ 
& Spectrum change and support innovation when transitioning from $\mathbf{X}$ to $\mathbf{X}'$ \\
$\mathrm{RID}(\mathbf{X}\mid \mathbf{X}')$ 
& Representation Information Discrepancy, measuring update-induced innovation via spectrum change plus support innovation \\
$\mathrm{TME}(\mathbf{X})$ 
& Token Mixing Entropy, an entropy-based measure of token-to-token mixing in $\mathbf{X}$ \\
$\mathrm{MixIG}(\mathbf{X}\mid \mathbf{X}')$ 
& Mixing Information Gain, defined as $\mathrm{TME}(\mathbf{X}')-\mathrm{TME}(\mathbf{X})$ to quantify reconfiguration \\
$\mathbf{Q},\,\mathbf{K},\,\mathbf{V},\,\mathbf{A}$ 
& Query, key, value, and attention weights (attention distribution / matrix) \\
$\mathbf{X}^{\text{(rope)}}_{\mathrm{in}},\,\mathbf{X}^{\text{(no-rope)}}_{\mathrm{in}}$ 
& Layer-input representations with RoPE positional encoding enabled vs.\ disabled (for calibrating intrinsic discrepancy) \\
\bottomrule
\end{tabular}%
}
\end{table}

\section{Comparison with Prior Work}
\label{app:comparison_prior}

Prior module-level interpretability studies have largely relied on attribution, tracing, or component-specific functional analyses. For attention, this line examines whether attention weights faithfully explain predictions, how attention-mediated influence propagates across layers, or which heads implement specific functions \citep{jain2019attention,serrano2019attention,wiegreffe2019attention,abnar2020quantifying}. For FFNs, prior work shows that feed-forward layers can behave as key--value memories that associate textual patterns with output distributions \citep{geva2021transformer}. These approaches are valuable for localizing where a behavior or stored pattern appears. In contrast, our framework asks a different question: how does each module transform the shared residual stream? We therefore characterize updates at the representation level through innovation and reconfiguration, rather than assigning a behavior to a specific token, head, neuron, or memory slot.

This difference also changes the diagnostic perspective. Prior methods are often strongest at identifying what function is present in a model, such as token attribution, head functionality, or stored associations. For example, causal tracing and editing methods localize factual associations in feed-forward modules \citep{meng2022locating,meng2022mass,fang2024alphaedit}, circuit analyses identify knowledge-related pathways \citep{yao2024knowledge}, and head-level studies characterize which attention heads matter for in-context learning \citep{yin2025which}. In multimodal settings, related work further studies where visual and textual information is stored and transferred across MLLM components \citep{basu2024understanding}, while FFN analyses examine how feed-forward blocks reshape contextualization patterns \citep{kobayashi2023analyzing}. Our framework instead diagnoses what is insufficient or excessive in a residual update itself. \textsc{RID} asks whether a module injects new representational structure through spectral or subspace change, while \textsc{MixIG} asks whether a module meaningfully redistributes token-level information. Thus, innovation and reconfiguration are not direct substitutes for memory, retrieval, or attribution; they are update-level properties of the residual stream. This makes the analysis actionable, because weak innovation or weak reconfiguration can be directly linked to a module, layer, or intervention target.

The resulting conclusions are therefore complementary to prior work rather than redundant with it. For FFNs, memory-based interpretations explain how parameters can store and retrieve patterns, whereas our analysis measures how the FFN update changes representation geometry regardless of whether the source is parametric memory or contextual computation. For attention, circuit-level studies explain what algorithms attention can implement, whereas our claim concerns the visual side of current LVLM decoders: many attention updates show limited useful visual reconfiguration, and their score computation can often be replaced by simple priors without harming performance. In this sense, our work shifts the focus from identifying existing functions to diagnosing residual-stream deficiencies, revealing that current LVLMs do not consistently convert expensive visual attention scoring into necessary output-discriminative information flow.

\section{Details}\label{app:details}

\subsection{Dataset Details}
\label{sec:data:details}
Our experiments are conducted on a suite of benchmarks that probe complementary capabilities, spanning basic visual perception through advanced multimodal reasoning and robustness, including 3D and spatial reasoning, real-world question answering, multidisciplinary knowledge, general-purpose multimodal understanding, mathematical reasoning, and hallucination-related robustness. Detailed descriptions are provided below.

\textbf{POPE} \citep{li2023pope}. POPE is a diagnostic benchmark for \emph{object hallucination} in LVLMs, it contains \textbf{9,000} questions split into three complementary subsets (random, popular, adversarial) to stress different hallucination modes. We conduct the experiments in Section~\ref{sec:method:func} on POPE.

\textbf{3DSRBench} \citep{ma20253dsrbench}. 3DSRBench targets \emph{3D and spatial reasoning} by evaluating whether a model can infer geometric relations beyond surface-level recognition. It includes \textbf{1,500} visual QA problems spanning diverse 3D reasoning skills (e.g., relative depth, viewpoint-dependent relations, and compositional spatial constraints). The dataset is intended to separate ``seeing'' from ``reasoning in 3D space'' under multimodal inputs. 

\textbf{RealWorldQA} \citep{visheratin_realworldqa}. RealWorldQA evaluates \emph{real-world visual question answering} on everyday imagery, emphasizing practical robustness rather than curated or synthetic settings. It contains \textbf{765} real-world images paired with questions, covering varied scenes and conditions that commonly challenge LVLM grounding.

\textbf{MMMU} \citep{yue2023mmmu}. MMMU is a large-scale benchmark for \emph{multidisciplinary multimodal understanding and reasoning}, spanning many academic domains. It contains \textbf{11,500+} questions across \textbf{30} subjects, covering both knowledge-intensive understanding and higher-level reasoning with visual inputs. Because evaluation on the full test set is restricted, we follow the widely adopted protocol in prior work and conduct our experiments on the \texttt{validation} split (900 samples).

\textbf{VMC-Bench} \citep{Zhang_2025_CVPR}. VMC-Bench evaluates \emph{general multimodal understanding} with an emphasis on challenging, automatically constructed multiple-choice questions. 
It transforms 20 widely-used VQA datasets into a unified multiple-choice benchmark. These datasets can be broadly categorized to assess general capabilities of VLMs (VQAv2 \citep{goyal2017making}, OKVQA \citep{marino2019ok}, MMVet \citep{yu2023mm}, VizWiz \citep{gurari2018vizwiz}, A-OKVQA \citep{schwenk2022okvqa}, MMStar \citep{chen2024we}, SEEDBench \citep{li2024seed}), reasoning capabilities (MathVision \citep{wang2024measuring}, GQA \citep{hudson2019gqa}, MMMU \citep{yue2023mmmu}, RealWorldQA \citep{visheratin_realworldqa}, MathVista \citep{lu2023mathvista}, ScienceQA \citep{lu2022learn}), OCR tasks (OCRVQA \citep{mishra2019ocr}, TextVQA \citep{singh2019towards}), and document and chart understanding (DocVQA \citep{mathew2021docvqa}, InfoVQA \citep{mathew2022infographicvqa}, ChartQA \citep{masry2022chartqa}, TableVQABench \citep{kim2024tablevqa}, AI2D \citep{kembhavi2016diagram}).

VMC-Bench contains \textbf{9,018} questions and is used to stress-test model discrimination among closely competing options. 

\textbf{MathVista} \citep{lu2023mathvista}. MathVista focuses on \emph{visual mathematical reasoning}, requiring models to combine perception (reading diagrams, charts, or scenes) with mathematical problem solving. It contains \textbf{5,141} QA instances covering a wide range of math-reasoning skills grounded in visual context. Because the official MathVista test evaluation is not publicly available, we conduct our experiments on the \texttt{testmini} split (1,000 samples).

\textbf{HallusionBench} \citep{Guan_2024_CVPR}. HallusionBench is a targeted benchmark for \emph{hallucination-related robustness}, separating failures caused by visual misperception (illusion-like cases) from those caused by language priors. It contains \textbf{1,129} image–question pairs constructed to systematically elicit hallucination behaviors under controlled conditions.

\subsection{Experimental Details}
\label{sec:exp:details}

\textbf{Dataset settings.}
For each benchmark, we follow a consistent evaluation protocol across all models. Specifically, we feed every image--question pair from the dataset to the model under the same input formatting and inference configuration, and compute the corresponding task metric using the official evaluation script whenever available.

\textbf{Model settings.}
Within each model category, we adopt a unified inference setup to ensure fair comparison. We group the evaluated LVLMs into three categories.

\textbf{(i) General-purpose LVLMs.}
This category includes \texttt{Qwen-2.5-VL}, \texttt{LLaVA-1.5}, \texttt{Yi}, \texttt{LLaVA-OneVision}, \texttt{Mistral-1.6}, and \texttt{Vicuna-1.6}. For these models, we directly input the dataset image--question pair using their default chat templates.

\textbf{(ii) Vision-query optimized LVLMs.}
This category includes \texttt{Reverse} and \texttt{CoF}. For these models, we follow the inference and prompting settings specified in their respective papers to reproduce their intended evaluation protocol.

\textbf{(iii) Reasoning-oriented LVLMs.}
This category includes \texttt{MM-Eureka}, \texttt{Orsta}, and \texttt{Ocean-R1}. For these models, we append an explicit reasoning trigger to encourage open-ended deliberation, and extract the final prediction from the \texttt{<answer>} tags in the generated output.

\begin{tcolorbox}[
  colback=gray!5,
  colframe=black!35,
  title=\textbf{Reasoning-trigger prompt},
  boxrule=0.6pt,
  arc=2mm,
  left=1mm,right=1mm,top=1mm,bottom=1mm
]
\small
You FIRST think about the reasoning process as an internal monologue and then provide the final answer.
The reasoning process MUST BE enclosed within \verb|<think>| \verb|</think>| tags.
The final answer MUST BE in \verb|<answer>| \verb|</answer>| tags.
\end{tcolorbox}

\textbf{Generation hyperparameters.}
We use the following decoding parameters for all experiments, and keep all unspecified options at their default values:
\[
\begin{aligned}
\texttt{max\_new\_tokens}&=1024, &
\texttt{output\_attentions}&=\texttt{True}, &
\texttt{return\_dict\_in\_generate}&=\texttt{True}.
\end{aligned}
\]

\textbf{Evaluation details.}
We follow the official evaluation protocols of each dataset and report \emph{accuracy} as the primary metric. For open-ended outputs (e.g., from reasoning-style models), we parse the model’s prediction from the content enclosed by the \verb|<think>| tag and use it as the final answer for scoring.

\subsection{SAP Details}
\label{app:det:htap}

This appendix provides implementation details for the SAP intervention in Sec.~\ref{sec:exp:pre}, including (i) the three SAP modes and (ii) how we select affected layers and heads for each architecture.

\paragraph{\textbf{SAP modes.}}
Shared Attention Prior (SAP) replaces the original attention scores with a lightweight prior that is computed once per input and then shared across selected layers and heads, requiring substantially less computation than per-layer score estimation. We instantiate three SAP modes:

\emph{(i) Visual-encoder attention.}
Since the vision encoder is trained with vision-centric objectives (e.g., hierarchical vision encoders such as Swin Transformer \citep{liu2021swin}), we replace decoder attention scores with the last-layer attention maps from the visual encoder as a natural alignment prior. Note that the visual tokens used by the decoder may be merged relative to the encoder output (e.g., \texttt{spatial\_merge\_size}=2 in Qwen-style encoders \citep{qwen2.5-VL}), so we align resolutions by average pooling the encoder attention over each $m\times m$ merged block (with $m=\texttt{spatial\_merge\_size}$) before substitution.

\emph{(ii) Patch complexity.}
We compute a low-cost patch prior from the input image using the decoder patch size. For each patch $p$, we first convert RGB to grayscale \cite{bt2011studio}
\[
g(u,v) = 0.299\,R(u,v) + 0.587\,G(u,v) + 0.114\,B(u,v),
\]
then define an efficient gradient-magnitude statistic via mean absolute finite differences:
\[
G_x(p) = \frac{1}{HW'}\sum_{u=1}^{H}\sum_{v=1}^{W-1}\big|g(u,v+1)-g(u,v)\big|,
\quad
G_y(p) = \frac{1}{H'W}\sum_{u=1}^{H-1}\sum_{v=1}^{W}\big|g(u+1,v)-g(u,v)\big|,
\]
\[
\mathrm{grad}(p)=G_x(p)+G_y(p),
\qquad
\mathrm{var}(p)=\mathrm{Var}\big(g(u,v)\big),
\qquad
c(p)=\mathrm{grad}(p)+\mathrm{var}(p).
\]
Here $H$ and $W$ denote the patch height and width (in pixels), and we set $H'=H-1$ and $W'=W-1$ to match the valid ranges of the finite differences. Intuitively, $\mathrm{grad}(p)$ summarizes local edge strength within the patch \cite{pertuz2013analysis}, while $\mathrm{var}(p)$ measures within-patch intensity dispersion; we combine them as $c(p)$ and use $\{c(p)\}$ as a patch-wise attention prior.

\emph{(iii) Noise.}
We directly sample a Gaussian tensor with the same shape as the target attention scores and substitute it as the prior.

\paragraph{\textbf{Selecting layers and heads.}}
We choose affected layers and heads via ablations (see Appendix~\ref{app:more_results}) for each architecture. Layers are selected by depth order (contiguous ranges), while heads are selected by ranking their \emph{non-visual} attention mass. Concretely, let $A^{l,h}_{b,t,i}$ denote the normalized attention weight at layer $l$, head $h$, batch item $b$, for query position $t$ over key position $i$. Let the visual-token span be $[v_{\text{start}}, v_{\text{end}})$. We define the non-visual index set
\[
\mathcal{N} \;=\; \{1,\dots,v_{\text{start}}-1\}\,\cup\,\{v_{\text{end}},\dots,S_c\}.
\]
Using the last query position $t=-1$ (the current decoding step), we score each head by the negative mean non-visual mass:
\[
s_h \;=\; -\,\frac{1}{B\,|\mathcal{N}|}\sum_{b=1}^{B}\sum_{i\in\mathcal{N}} A^{l,h}_{b,-1,i}.
\]
We rank heads by $s_h$ and select a percentile band (e.g., $[0.0,0.3]$) per chosen layer; for heads within this band, we replace their attention scores with the shared SAP prior. 
Our head-selection strategy is motivated by the empirically supported \emph{head specialization} hypothesis in multimodal Transformers: different heads and layers tend to preferentially route modality-specific signals (e.g., visual vs.\ textual attributes)~\citep{basile2025head}. To better decouple visual interactions from text-dominated routing effects, we rank heads by their \emph{non-visual} attention mass and intervene on a chosen percentile range, so that the replacement primarily targets heads that allocate relatively less probability to non-visual tokens.

\section{Additional Results}\label{app:more_results}
In this section, we use VMC-Bench \citep{Zhang_2025_CVPR}, which provides a comprehensive evaluation of LVLMs along five dimensions: General, Reasoning, OCR, Math, and Doc\&Chart.
\subsection{Ablation Studies for SAP}
\label{app:abla_sap}

We conduct ablations across all models; except that the mode ablation is already reported in Table~\ref{tab:intervention_bench}, we focus here on ablating (i) the affected layers and (ii) the affected heads. 

\renewcommand\tabcolsep{4pt} 
\renewcommand\arraystretch{1.12}

\begin{table*}[htbp]
\centering
\caption{Ablation Study on \textbf{Attention Heads} (Part I): Evaluation of General Perception and Reasoning Capabilities across Different Parameter Settings. The default configurations for each model is highlighted in bold red.}
\label{tab:abla_head_part1}
\resizebox{\linewidth}{!}{
\begin{tabular}{l|c|ccccc|cccccc}
\Xhline{1.2pt}
\multicolumn{2}{c|}{} & \multicolumn{5}{c|}{\textbf{General}} & \multicolumn{6}{c}{\textbf{Reasoning}} \\
\rowcolor{CadetBlue!18}
\textbf{Model} & \textbf{Heads} & \textbf{VQAv2} & \textbf{VizWiz} & \textbf{OKVQA} & \textbf{MMVet} & \textbf{A-OKVQA} & \textbf{MMStar} & \textbf{SEED} & \textbf{SciQA} & \textbf{RWQA} & \textbf{MMMU} & \textbf{GQA} \\
\Xhline{1.2pt}

\texttt{Qwen-2.5-VL-7B} & [0.0, 0.3] & 83.56 & 82.60 & 84.94 & 71.22 & 78.82 & 59.86 & 74.81 & 80.32 & 53.21 & 54.09 & 81.42 \\
\rowcolor{gray!10}
 & \textcolor{red}{\textbf{[0.3, 0.6]}} & \textbf{90.51} & \textbf{87.99} & \textbf{90.12} & 73.38 & \textbf{86.35} & \textbf{63.18} & 79.01 & 83.48 & \textbf{59.40} & \textbf{55.77} & \textbf{85.57} \\
 & [0.6, 0.9] & 89.35 & 87.50 & 89.63 & 71.94 & 84.24 & 62.00 & 78.02 & \textbf{84.39} & 57.34 & 55.53 & 83.62 \\
\rowcolor{gray!10}
 & [0.2, 0.8] & 89.58 & 86.76 & 88.64 & 72.66 & 84.00 & 61.28 & \textbf{79.75} & 84.16 & 58.72 & 53.12 & 84.84 \\
 & [0.0, 1.0] & 84.72 & 87.50 & 84.94 & \textbf{75.54} & 79.76 & 57.48 & 76.05 & 81.00 & 55.96 & 50.96 & 79.71 \\
\Xhline{1.0pt}

\texttt{LLaVA-1.5-7B} & [0.0, 0.3] & 67.13 & 64.22 & 74.57 & 46.76 & 67.76 & 34.44 & 56.54 & 56.56 & \textbf{37.61} & \textbf{36.54} & 64.06 \\
\rowcolor{gray!10}
 & [0.3, 0.6] & \textbf{71.30} & 66.91 & 80.25 & 55.40 & 71.06 & 35.63 & 60.99 & \textbf{59.73} & 36.93 & \textbf{36.54} & 69.19 \\
 & [0.6, 0.9] & 68.52 & 67.16 & 79.01 & 49.64 & 68.24 & 32.78 & 59.01 & 58.60 & 36.70 & 34.62 & 67.24 \\
\rowcolor{gray!10}
 & \textcolor{red}{\textbf{[0.2, 0.8]}} & \textbf{71.30} & \textbf{72.06} & \textbf{81.73} & 53.24 & \textbf{73.41} & \textbf{38.24} & \textbf{63.46} & 56.79 & 36.24 & 35.10 & \textbf{70.17} \\
 & [0.0, 1.0] & 66.44 & 67.89 & 76.05 & 46.76 & 66.59 & 31.59 & 53.58 & 50.68 & 36.47 & 33.17 & 60.88 \\
\Xhline{1.0pt}

\texttt{LLaVA-OV-7B} & [0.0, 0.3] & 51.16 & 57.11 & 62.96 & 47.48 & 58.35 & 41.57 & 51.36 & 51.81 & 42.66 & 35.34 & 55.75 \\
\rowcolor{gray!10}
 & \textcolor{red}{\textbf{[0.3, 0.6]}} & 83.33 & \textbf{85.29} & \textbf{88.15} & \textbf{68.35} & 85.88 & \textbf{52.97} & 77.04 & \textbf{81.67} & 54.13 & \textbf{43.27} & \textbf{84.60} \\
 & [0.6, 0.9] & 83.80 & 84.56 & 87.16 & 67.63 & \textbf{86.82} & 52.02 & 76.54 & 81.00 & 55.96 & 42.31 & \textbf{84.60} \\
\rowcolor{gray!10}
 & [0.2, 0.8] & \textbf{84.72} & 83.82 & 84.69 & 64.75 & \textbf{86.82} & 49.17 & \textbf{79.26} & 77.60 & \textbf{56.42} & 41.83 & 82.89 \\
 & [0.0, 1.0] & 31.71 & 31.37 & 31.11 & 23.02 & 31.06 & 28.98 & 29.38 & 32.81 & 29.36 & 29.09 & 31.05 \\
\Xhline{1.2pt}
\end{tabular}
}
\end{table*}

\renewcommand\tabcolsep{5pt}
\renewcommand\arraystretch{1.12}

\begin{table*}[htbp]
\centering
\caption{Ablation Study on \textbf{Attention Heads} (Part II): Performance Analysis on Document/Chart and OCR Task. The default configurations are highlighted in bold red. AVG represents the overall average across all benchmarks, including Table \ref{tab:abla_head_part1}.}
\label{tab:abla_head_part2}
\resizebox{\linewidth}{!}{
\begin{tabular}{l|c|cc|ccccc|cc|c}
\Xhline{1.2pt}
\multicolumn{2}{c|}{} & \multicolumn{2}{c|}{\textbf{Math}} & \multicolumn{5}{c|}{\textbf{Doc \& Chart}} & \multicolumn{2}{c|}{\textbf{OCR}} & \\
\rowcolor{CadetBlue!18}
\textbf{Model} & \textbf{Heads} & \textbf{Vista} & \textbf{Vision} & \textbf{DocVQA} & \textbf{Table} & \textbf{ChartQA} & \textbf{InfoVQA} & \textbf{AI2D} & \textbf{TextVQA} & \textbf{OCRVQA} & \textbf{AVG} \\
\Xhline{1.2pt}

\texttt{Qwen-2.5-VL-7B} & [0.0, 0.3] & 51.49 & 32.36 & 72.61 & 66.22 & 74.77 & 55.53 & 72.89 & 93.03 & 83.16 & 70.35 \\
\rowcolor{gray!10}
 & \textcolor{red}{\textbf{[0.3, 0.6]}} & 53.96 & \textbf{34.38} & \textbf{77.06} & \textbf{72.07} & \textbf{79.59} & \textbf{59.22} & 77.22 & 95.28 & 91.71 & \textbf{74.76} \\
 & [0.6, 0.9] & 53.47 & 33.48 & 75.95 & 70.72 & 78.67 & 57.83 & \textbf{77.68} & \textbf{95.73} & \textbf{94.04} & 74.06 \\
\rowcolor{gray!10}
 & [0.2, 0.8] & 53.96 & 33.26 & 72.61 & 70.27 & 79.36 & 58.76 & 77.45 & 93.93 & \textbf{94.04} & 73.86 \\
 & [0.0, 1.0] & \textbf{55.45} & 32.13 & 69.27 & 62.39 & 74.77 & 53.92 & 73.58 & 91.69 & 92.75 & 70.98 \\
\Xhline{1.0pt}

\texttt{LLaVA-1.5-7B} & [0.0, 0.3] & 22.77 & 25.39 & 34.30 & 25.68 & 26.61 & 30.41 & 43.28 & 55.96 & 65.03 & 46.78 \\
\rowcolor{gray!10}
 & [0.3, 0.6] & 25.74 & 28.54 & 37.19 & 29.05 & 32.34 & 29.95 & 41.91 & 61.35 & 67.36 & 49.87 \\
 & [0.6, 0.9] & \textbf{29.70} & 26.97 & 38.31 & 27.93 & 30.28 & 29.26 & \textbf{43.96} & 61.12 & 68.91 & 48.90 \\
\rowcolor{gray!10}
 & \textcolor{red}{\textbf{[0.2, 0.8]}} & 25.25 & 26.97 & \textbf{38.75} & \textbf{29.50} & \textbf{32.80} & 31.80 & 42.82 & \textbf{63.37} & \textbf{70.98} & \textbf{50.70} \\
 & [0.0, 1.0] & 26.73 & \textbf{30.34} & 33.85 & 27.70 & 28.67 & \textbf{32.95} & 41.91 & 60.67 & 68.13 & 47.05 \\
\Xhline{1.0pt}

\texttt{LLaVA-OV-7B} & [0.0, 0.3] & 45.05 & 29.66 & 47.88 & 35.81 & 40.37 & 33.41 & 43.51 & 52.36 & 61.66 & 47.26 \\
\rowcolor{gray!10}
 & \textcolor{red}{\textbf{[0.3, 0.6]}} & 51.49 & \textbf{30.79} & 71.49 & 47.75 & \textbf{60.09} & 46.54 & 65.83 & \textbf{87.42} & 89.90 & \textbf{67.80} \\
 & [0.6, 0.9] & \textbf{53.47} & 29.66 & 72.83 & \textbf{48.42} & 56.65 & \textbf{47.47} & \textbf{66.29} & \textbf{87.42} & \textbf{90.67} & 67.76 \\
\rowcolor{gray!10}
 & [0.2, 0.8] & 49.01 & 28.99 & \textbf{73.05} & 45.50 & 54.13 & 46.77 & 63.55 & 85.17 & 87.56 & 66.29 \\
 & [0.0, 1.0] & 31.68 & 22.25 & 33.63 & 26.80 & 32.57 & 31.11 & 27.56 & 31.69 & 36.53 & 30.14 \\
\Xhline{1.2pt}
\end{tabular}
}
\end{table*}

The head ablation results are reported in Tables~\ref{tab:abla_head_part1} and~\ref{tab:abla_head_part2}. Overall, intervening on mid-quantile heads consistently outperforms modifying either tail, while the models are more sensitive to perturbations on the lower-quantile heads. Under our head partition criterion, these lower-quantile heads primarily attend to non-visual (text) tokens; altering them therefore disrupts textual representations and degrades performance. For reference, we highlight the default affected-head setting for each architecture in red in the tables.

\renewcommand\tabcolsep{4pt}
\renewcommand\arraystretch{1.12}

\begin{table*}[htbp]
\centering
\caption{Ablation Study on \textbf{Affected Layers} (Part I): Evaluation of General Perception and Reasoning Capabilities across Different Layer Configurations. The default settings are highlighted in bold red.}
\label{tab:abla_layer_part1}
\resizebox{\linewidth}{!}{
\begin{tabular}{l|c|ccccc|cccccc}
\Xhline{1.2pt}
\multicolumn{2}{c|}{} & \multicolumn{5}{c|}{\textbf{General}} & \multicolumn{6}{c}{\textbf{Reasoning}} \\
\rowcolor{CadetBlue!18}
\textbf{Model} & \textbf{Layers} & \textbf{VQAv2} & \textbf{VizWiz} & \textbf{OKVQA} & \textbf{MMVet} & \textbf{A-OKVQA} & \textbf{MMStar} & \textbf{SEED} & \textbf{SciQA} & \textbf{RWQA} & \textbf{MMMU} & \textbf{GQA} \\
\Xhline{1.2pt}

\texttt{LLaVA-1.5-7B} & [2, 7] & 46.30 & 42.40 & 49.88 & 41.73 & 47.53 & 31.83 & 42.22 & 38.69 & 28.21 & 28.12 & 46.45 \\
\rowcolor{gray!10}
 & [2, 13] & 36.81 & 34.31 & 39.26 & 28.78 & 32.47 & 26.37 & 30.86 & 36.20 & 30.73 & 23.80 & 35.21 \\
 & [6, 11] & 58.33 & 51.96 & 61.23 & 44.60 & 56.24 & 28.27 & 43.21 & 43.89 & 27.29 & 32.93 & 47.19 \\
\rowcolor{gray!10}
 & [12, 17] & 68.52 & 68.87 & 73.58 & 46.04 & 67.06 & 32.54 & 57.04 & 52.71 & 35.09 & 35.34 & 63.57 \\
 & [14, 25] & 71.30 & 66.42 & 77.78 & 49.64 & 70.35 & 34.68 & 60.99 & 58.37 & 38.30 & 36.06 & \textbf{71.88} \\
\rowcolor{gray!10}
 & \textcolor{red}{\textbf{[18, 23]}} & 71.30 & \textbf{72.06} & \textbf{81.73} & \textbf{53.24} & \textbf{73.41} & \textbf{38.24} & \textbf{63.46} & 56.79 & 36.24 & 35.10 & 70.17 \\
 & [18, 29] & \textbf{72.92} & 65.93 & 79.51 & 47.48 & 70.35 & 34.44 & 58.77 & 57.69 & 36.24 & 34.62 & 70.42 \\
\rowcolor{gray!10}
 & [22, 31] & 67.82 & 68.14 & 77.53 & 47.48 & 67.76 & 31.35 & 59.51 & 57.24 & 36.70 & \textbf{36.54} & 66.99 \\
 & [24, 29] & 69.91 & 66.18 & 77.28 & 45.32 & 70.82 & 33.02 & 63.21 & \textbf{58.60} & \textbf{39.68} & 34.62 & 68.22 \\
\Xhline{1.0pt}

\texttt{LLaVA-OV-7B} & [0, 6] & 65.97 & 68.14 & 70.12 & 44.60 & 65.41 & 40.86 & 56.05 & 66.06 & 43.12 & 35.58 & 71.64 \\
\rowcolor{gray!10}
 & [0, 13] & 59.95 & 60.78 & 62.72 & 40.29 & 58.82 & 40.38 & 54.32 & 59.73 & 33.49 & 30.77 & 60.64 \\
 & [7, 13] & 82.87 & 81.62 & 83.95 & 61.87 & 86.35 & 47.98 & \textbf{77.28} & 78.28 & 54.13 & 40.14 & 80.44 \\
\rowcolor{gray!10}
 & [14, 20] & \textbf{84.03} & \textbf{87.01} & 86.91 & 63.31 & 86.59 & 48.93 & 75.80 & 80.09 & 52.29 & 42.07 & 82.15 \\
 & [14, 27] & 83.33 & 83.09 & 86.42 & 64.03 & \textbf{87.06} & \textbf{52.97} & 76.30 & 81.00 & \textbf{54.59} & \textbf{43.99} & \textbf{85.82} \\
\rowcolor{gray!10}
 & \textcolor{red}{\textbf{[21, 27]}} & 83.33 & 85.29 & {\textbf{88.15}} & {\textbf{68.35}} & 85.88 & {\textbf{52.97}} & 77.04 & {\textbf{81.67}} & 54.13 & 43.27 & 84.60 \\
\Xhline{1.2pt}
\end{tabular}
}
\end{table*}

\renewcommand\tabcolsep{5pt}
\renewcommand\arraystretch{1.12}

\begin{table*}[htbp]
\centering
\caption{Ablation Study on \textbf{Affected Layers} (Part II): Performance Analysis on Document/Chart Understanding and OCR Task. The default settings are highlighted in bold red.}
\label{tab:abla_layer_part2}
\resizebox{\linewidth}{!}{
\begin{tabular}{l|c|cc|ccccc|cc|c}
\Xhline{1.2pt}
\multicolumn{2}{c|}{} & \multicolumn{2}{c|}{\textbf{Math}} & \multicolumn{5}{c|}{\textbf{Doc \& Chart}} & \multicolumn{2}{c|}{\textbf{OCR}} & \\
\rowcolor{CadetBlue!18}
\textbf{Model} & \textbf{Layers} & \textbf{Vista} & \textbf{Vision} & \textbf{DocVQA} & \textbf{Table} & \textbf{ChartQA} & \textbf{InfoVQA} & \textbf{AI2D} & \textbf{TextVQA} & \textbf{OCRVQA} & \textbf{AVG} \\
\Xhline{1.2pt}

\texttt{LLaVA-1.5-7B} & [2, 7] & 25.25 & 24.27 & 30.51 & 27.25 & 26.83 & 29.72 & 32.57 & 44.04 & 43.01 & 36.34 \\
\rowcolor{gray!10}
 & [2, 13] & 28.71 & 28.76 & 25.17 & 23.42 & 27.75 & 23.27 & 28.02 & 38.88 & 31.61 & 30.52 \\
 & [6, 11] & 27.23 & 26.52 & 35.63 & 24.77 & 24.31 & \textbf{34.10} & 37.13 & 48.31 & 52.85 & 40.30 \\
\rowcolor{gray!10}
 & [12, 17] & 23.76 & 27.42 & 34.52 & 29.50 & 26.15 & 31.80 & 38.95 & 60.45 & 65.54 & 46.92 \\
 & [14, 25] & 25.74 & 27.42 & 37.42 & \textbf{31.98} & 28.67 & 31.34 & 42.82 & 61.80 & 66.58 & 49.48 \\
\rowcolor{gray!10}
 & \textcolor{red}{\textbf{[18, 23]}} & 25.25 & 26.97 & 38.75 & 29.50 & 32.80 & 31.80 & 42.82 & \textbf{63.37} & \textbf{70.98} & \textbf{50.70} \\
 & [18, 29] & 28.71 & 28.54 & \textbf{41.43} & 29.50 & 26.15 & 28.80 & 42.14 & 62.47 & 65.80 & 49.10 \\
\rowcolor{gray!10}
 & [22, 31] & \textbf{35.15} & \textbf{30.11} & 40.09 & 31.31 & 29.59 & 31.11 & 42.60 & 60.90 & 68.65 & 49.33 \\
 & [24, 29] & 28.22 & 24.27 & 37.19 & 27.03 & \textbf{33.26} & 30.65 & \textbf{45.10} & 59.10 & 67.36 & 48.95 \\
\Xhline{1.0pt}

\texttt{LLaVA-OV-7B} & [0, 6] & 37.13 & 27.42 & 49.89 & 38.06 & 44.95 & 31.80 & 51.71 & 64.27 & 68.13 & 52.05 \\
\rowcolor{gray!10}
 & [0, 13] & 37.62 & 28.09 & 44.99 & 32.21 & 35.55 & 36.87 & 46.01 & 55.51 & 68.39 & 47.36 \\
 & [7, 13] & 53.96 & 30.11 & 71.94 & 42.57 & 50.46 & \textbf{47.24} & 64.24 & 84.72 & 87.31 & 65.37 \\
\rowcolor{gray!10}
 & [14, 20] & \textbf{54.46} & 28.31 & \textbf{73.50} & 46.62 & 52.52 & 43.32 & 64.69 & 84.49 & 88.86 & 66.30 \\
 & [14, 27] & 49.50 & 27.42 & 71.94 & 45.72 & 58.94 & 45.16 & 65.15 & \textbf{87.42} & 89.12 & 66.95 \\
\rowcolor{gray!10}
 & \textcolor{red}{\textbf{[21, 27]}} & 51.49 & {\textbf{30.79}} & 71.49 & {\textbf{47.75}} & {\textbf{60.09}} & 46.54 & {\textbf{65.83}} & \textbf{87.42} & {\textbf{89.90}} & {\textbf{67.80}} \\
\Xhline{1.2pt}
\end{tabular}
}
\end{table*}

The layer ablation results are reported in Tables~\ref{tab:abla_layer_part1} and~\ref{tab:abla_layer_part2}. We observe that LVLMs are highly sensitive to interventions in early layers, whereas perturbing middle or late layers typically causes only minor changes. For instance, for \texttt{LLaVA-1.5-7B}, intervening on Layers~1--7 reduces accuracy by $13\%$, while intervening on Layers~22--31 incurs only a $0.5\%$ drop. This pattern further supports a pervasive issue in current LVLMs: a substantial fraction of decoder attention computation is redundant.

\subsection{Extending SAP to Other Architectures and Larger Variants}\label{app:more_sap}
Table~\ref{tab:qwen_arch_head_abla_vmc} shows that head-percentile interventions yield consistent, model-dependent optima across \texttt{Qwen-2.5-VL} variants under the same affected-layer range ($[1,27]$). In particular, mid-percentile heads (e.g., $[0.3,0.6]$) are frequently the best-performing choice for several variants, while extreme ranges can be substantially less stable for some models. Overall, these results indicate that the sensitivity of SAP-style interventions is structured rather than uniform across heads, motivating architecture-aware head selection in subsequent experiments.

\begin{table*}[h]
\centering
\caption{Head ablation on \texttt{Qwen-2.5-VL} architecture variants (\textbf{affected layers fixed to} $[1,27]$ on the decoder). Each column reports VMC accuracy under a head percentile interval $[h_{\min},h_{\max}]$; the best setting per model is \textbf{bolded}.}
\label{tab:qwen_arch_head_abla_vmc}
\resizebox{.7\linewidth}{!}{
\begin{tabular}{l|ccccc}
\Xhline{1.2pt}
\rowcolor{CadetBlue!18}
\textbf{Model} &
\textbf{[0.0, 0.3]} &
\textbf{[0.3, 0.6]} &
\textbf{[0.6, 0.9]} &
\textbf{[0.2, 0.8]} &
\textbf{[0.0, 1.0]} \\
\Xhline{1.2pt}

\texttt{CoF-rl-model-7b}         & 0.567 & \textbf{0.637} & 0.631 & 0.613 & 0.461 \\
\rowcolor{gray!10}
\texttt{CoF-sft-model-7b}        & 0.558 & \textbf{0.640} & 0.634 & 0.595 & 0.415 \\
\texttt{MM-Eureka-Qwen-32B}      & \textbf{0.727} & 0.706 & 0.676 & 0.457 & 0.161 \\
\rowcolor{gray!10}
\texttt{MM-Eureka-Qwen-7B}       & 0.531 & 0.623 & \textbf{0.627} & 0.436 & 0.151 \\
\texttt{Ocean\_R1\_7B\_Instruct}  & \textbf{0.639} & 0.594 & 0.499 & 0.248 & 0.086 \\
\rowcolor{gray!10}
\texttt{Orsta-7B}                & 0.537 & \textbf{0.604} & 0.599 & 0.309 & 0.087 \\
\texttt{Qwen2.5-VL-32B-Instruct} & 0.827 & \textbf{0.829} & 0.828 & 0.816 & 0.791 \\
\rowcolor{gray!10}
\texttt{reverse\_qwen25\_vl}      & 0.001 & 0.001 & 0.003 & 0.001 & \textbf{0.007} \\

\Xhline{1.2pt}
\end{tabular}
}
\end{table*}

\section{Layer-wise Attention Tracing}\label{app:more_vis}
\paragraph{Tracing cross-patch interactions.}
We provide a visualization tool to trace layer-wise visual interactions from decoder attention. For each layer $l$, we construct a visual interaction graph $\mathcal{G}^{(l)}=(\mathcal{V},\mathcal{E}^{(l)})$ over visual patches \cite{abnar2020quantifying}, where $\mathcal{V}=\{1,\dots,S_v\}$ indexes visual tokens and edges are induced by thresholded visual-to-visual attention. Let $\mathbf{A}^{(l)}\in[0,1]^{S\times S}$ denote the head-averaged attention matrix at layer $l$ (after averaging over heads). Restricting to the visual block yields $\mathbf{A}^{(l)}_{vv}\in[0,1]^{S_v\times S_v}$. We include a directed edge $j\!\to\! i$ whenever
\[
A^{(l)}_{vv}(i,j)\;\ge\;\tau,\qquad \tau=0.1,
\]
interpreting $A^{(l)}_{vv}(i,j)$ as patch $i$ attending to patch $j$.

\paragraph{Constructing key regions from COCO instance annotations.}
To operationalize question-relevant visual evidence, we leverage the fact that POPE samples are drawn from MSCOCO images and thus inherit COCO instance-level object annotations with localization information (e.g., bounding boxes)~\citep{lin2014microsoft}. For each POPE query, we identify the referenced object category and retrieve its annotated bounding box(es). After applying the same image preprocessing as the LVLM (e.g., resizing and patchification into a $t_h\times t_w$ visual grid), we map each bounding box to a set of visual patch indices by marking all patches whose spatial support intersects the box. The union of these patches forms the key-patch set $\mathcal{K}\subseteq\mathcal{V}=\{1,\dots,S_v\}$, which we use below to quantify how much of the layer-wise visual interaction graph is routed through question-relevant regions.

\paragraph{Key-region degree ratio.}
For each layer, we treat $\mathcal{G}^{(l)}$ as the visual interaction graph and quantify how much interaction mass is routed through question-relevant regions. Let $\mathcal{K}\subseteq\mathcal{V}$ be the set of key patches that correspond to question-relevant visual evidence. Define the key-region degree ratio as
\[
\rho^{(l)} \;=\; 
\frac{\big|\{(j\!\to\! i)\in\mathcal{E}^{(l)}:\; i\in\mathcal{K}\ \text{or}\ j\in\mathcal{K}\}\big|}
{|\mathcal{E}^{(l)}|}.
\]
We randomly sampled 100 correctly answered cases and 100 incorrectly answered cases, and computed $\rho^{(l)}$ for each case; see Figures~\ref{fig:case1}–\ref{fig:case6} for case studies. Averaged across samples, the key-region degree ratio is $4.2\%$ for incorrect answers versus $13.1\%$ for correct answers, indicating that failures are associated with substantially weaker attention-mediated interaction around question-relevant visual evidence, consistent with systematic misallocation of visual attention.
\section{Theorem and Proofs}\label{app:theory}
\begin{lemma}[Range of $\Delta\mathcal{S}$, $\Delta\mathcal{D}$, and RID]\label{lem:rid_range}
For $\mathbf{X},\mathbf{X}'\in\mathbb{R}^{S\times H}$, we have
$\Delta\mathcal{S}(\mathbf{X}\mid\mathbf{X}')\in[0,1]$ and $\Delta\mathcal{D}(\mathbf{X}\mid\mathbf{X}')\in[0,1]$.
Consequently, $\mathrm{RID}(\mathbf{X}\mid\mathbf{X}')\in[0,2]$.
\end{lemma}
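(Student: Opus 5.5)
The plan is to bound the two normalized components separately and then add the bounds. Throughout, write $Q=\min\{S,H\}$.

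\textbf{Step 1: the spectrum term.} First we show $1\le \mathrm{eRank}(\mathbf{Y})\le Q$ for any nonzero $\mathbf{Y}\in\mathbb{R}^{S\times H}$. The vector $p=(p_1,\ldots,p_Q)$ from Definition~\ref{def:rank} is a probability vector with at most $Q$ entries, using the convention $0\log 0=0$. Its Shannon entropy therefore lies in $[0,\log Q]$: nonnegativity holds termwise since $p_i\in[0,1]$, and the upper bound follows from Jensen's inequality, or equivalently from maximality of entropy at the uniform distribution. Exponentiating gives $\mathrm{eRank}\in[1,Q]$.

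Consequently, for two nonzero matrices, $|\mathrm{eRank}(\mathbf{X}')-\mathrm{eRank}(\mathbf{X})|\le Q-1<Q$. Dividing by $Q$ gives $\Delta\mathcal{S}\in[0,1)\subseteq[0,1]$; nonnegativity is immediate from the absolute value.

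The degenerate case $\mathbf{X}=\mathbf{0}$ or $\mathbf{X}'=\mathbf{0}$, where $p$ is undefined, needs a convention. The natural one is $\mathrm{eRank}(\mathbf{0})=0$. With it, every eRank lies in $[0,Q]$ and the bound $\Delta\mathcal{S}\le 1$ still holds. I will state this convention explicitly.

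\textbf{Step 2: the support term.} Nonnegativity is clear, since every piece of $\Delta\mathcal{D}$ is a norm or a ratio of norms. For the upper bound, the key fact is that $\mathbf{I}-\mathbf{P}_{\mathcal{C}(\mathbf{X})}$ and $\mathbf{I}-\mathbf{P}_{\mathcal{R}(\mathbf{X})}$ are orthogonal projectors, so each has operator norm at most $1$. For an orthogonal projector $\mathbf{P}$ we have $\|\mathbf{P}\mathbf{Y}\|_F^2=\sum_j\|\mathbf{P}\mathbf{y}_j\|_2^2\le\sum_j\|\mathbf{y}_j\|_2^2=\|\mathbf{Y}\|_F^2$, where the $\mathbf{y}_j$ are the columns of $\mathbf{Y}$. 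The same argument applied to rows handles right multiplication, using Definition~\ref{def:frob}.

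Hence $\|(\mathbf{I}-\mathbf{P}_{\mathcal{C}(\mathbf{X})})\mathbf{X}'\|_F\le\|\mathbf{X}'\|_F$ and $\|\mathbf{X}'(\mathbf{I}-\mathbf{P}_{\mathcal{R}(\mathbf{X})})\|_F\le\|\mathbf{X}'\|_F$. Adding these and dividing by $2\|\mathbf{X}'\|_F$ gives $\Delta\mathcal{D}\le1$.

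This step implicitly requires $\mathbf{X}'\neq\mathbf{0}$ so that the ratio is defined. I will note this as a standing assumption, or set $\Delta\mathcal{D}=0$ by convention when $\mathbf{X}'=\mathbf{0}$.

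\textbf{Step 3: combine.} Adding the bounds from Steps 1 and 2 gives $\mathrm{RID}=\Delta\mathcal{S}+\Delta\mathcal{D}\in[0,2]$.

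\textbf{Main obstacle.} There is no real analytic difficulty. The only delicate point is handling the degenerate zero matrices so that both $\mathrm{eRank}$ and the normalization in $\Delta\mathcal{D}$ are well defined. I expect to resolve this with the explicit conventions stated above, noting that these cases do not arise for actual hidden states.
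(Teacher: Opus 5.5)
Your proposal is correct and follows the paper's proof essentially step for step. You bound $\mathrm{eRank}$ in $[1,\min\{S,H\}]$ to get $\Delta\mathcal{S}\in[0,1]$, use that $\mathbf{I}-\mathbf{P}$ is non-expansive in Frobenius norm to get $\Delta\mathcal{D}\le 1$, and add. Your entropy-based justification of the eRank bound and your explicit conventions for zero matrices fill in points the paper leaves implicit: there, $\mathrm{eRank}(\mathbf{Z})\ge 1$ and the normalization by $\|\mathbf{X}'\|_F$ are undefined at $\mathbf{Z}=\mathbf{0}$ or $\mathbf{X}'=\mathbf{0}$.
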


\begin{proof}
Since $\mathrm{eRank}(\mathbf{Z})\in[1,\min\{S,H\}]$ for any $\mathbf{Z}$, we have
$0\le |\mathrm{eRank}(\mathbf{X}')-\mathrm{eRank}(\mathbf{X})|\le \min\{S,H\}$, hence
\[
\Delta\mathcal{S}(\mathbf{X}\mid\mathbf{X}')
=\frac{\big|\mathrm{eRank}(\mathbf{X}')-\mathrm{eRank}(\mathbf{X})\big|}{\min\{S,H\}}
\in[0,1].
\]

Let $\mathbf{P}$ be any orthogonal projector. Then $\mathbf{I}-\mathbf{P}$ is also an orthogonal projector and is non-expansive:
$\|(\mathbf{I}-\mathbf{P})\mathbf{Z}\|_F\le \|\mathbf{Z}\|_F$ and $\|\mathbf{Z}(\mathbf{I}-\mathbf{P})\|_F\le \|\mathbf{Z}\|_F$.
Applying this with $\mathbf{P}=\mathbf{P}_{\mathcal{C}(\mathbf{X})}$ and $\mathbf{P}=\mathbf{P}_{\mathcal{R}(\mathbf{X})}$ yields
\[
\big\|(\mathbf{I}-\mathbf{P}_{\mathcal{C}(\mathbf{X})})\mathbf{X}'\big\|_F
+
\big\|\mathbf{X}'(\mathbf{I}-\mathbf{P}_{\mathcal{R}(\mathbf{X})})\big\|_F
\le 2\|\mathbf{X}'\|_F.
\]
Therefore, under the normalization by $2\|\mathbf{X}'\|_F$,
\[
\Delta\mathcal{D}(\mathbf{X}\mid\mathbf{X}')
=
\frac{\big\|(\mathbf{I}-\mathbf{P}_{\mathcal{C}(\mathbf{X})})\mathbf{X}'\big\|_F
+
\big\|\mathbf{X}'(\mathbf{I}-\mathbf{P}_{\mathcal{R}(\mathbf{X})})\big\|_F}{2\|\mathbf{X}'\|_F}
\in[0,1].
\]
Finally, $\mathrm{RID}=\Delta\mathcal{S}+\Delta\mathcal{D}$ gives $\mathrm{RID}(\mathbf{X}\mid\mathbf{X}')\in[0,2]$.
\end{proof}
\begin{theorem}[Eckart--Young--Mirsky Theorem \citep{eckart1936approximation}]\label{thm:eym}
Let $\mathbf{X}$ have SVD as in Definition~\ref{def:svd}. For any $k \le Q$, define the rank-$k$ truncation
\[
\mathbf{X}_k = \sum_{i=1}^{k}\sigma_i \mathbf{u}_i \mathbf{v}_i^\top.
\]
Then $\mathbf{X}_k$ solves the best rank-$k$ approximation problem under the Frobenius norm:
\[
\mathbf{X}_k \in \arg\min_{\mathrm{rank}(\mathbf{Y})\le k}\|\mathbf{X}-\mathbf{Y}\|_F.
\]
\end{theorem}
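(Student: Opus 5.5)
The plan is to reduce the statement to a lower bound on the error of an arbitrary rank-$k$ matrix. First I compute the error of the truncation. By orthonormality of the singular vectors and Definition~\ref{def:frob}, $\mathbf{X}-\mathbf{X}_k=\sum_{i>k}\sigma_i\mathbf{u}_i\mathbf{v}_i^\top$ is itself an SVD, so
\[
\|\mathbf{X}-\mathbf{X}_k\|_F^2=\sum_{i=k+1}^{Q}\sigma_i^2 .
\]
It then suffices to show $\|\mathbf{X}-\mathbf{Y}\|_F^2\ge\sum_{i>k}\sigma_i^2$ for every $\mathbf{Y}$ with $\mathrm{rank}(\mathbf{Y})\le k$.

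For the lower bound I will go through Weyl's inequality for singular values. For any $\mathbf{A},\mathbf{B}$ and indices $i,j\ge 1$, we have $\sigma_{i+j-1}(\mathbf{A}+\mathbf{B})\le\sigma_i(\mathbf{A})+\sigma_j(\mathbf{B})$. I will prove this from the variational (Courant--Fischer) characterization
\[
\sigma_i(\mathbf{A})=\min_{\dim\mathcal{W}\le i-1}\;\max_{\mathbf{x}\perp\mathcal{W},\,\|\mathbf{x}\|=1}\|\mathbf{A}\mathbf{x}\|.
\]
Concretely, take the subspaces $\mathcal{W}_A,\mathcal{W}_B$ that are optimal for $\mathbf{A}$ and $\mathbf{B}$ respectively. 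Their sum $\mathcal{W}_A+\mathcal{W}_B$ has dimension at most $i+j-2$. For a unit vector $\mathbf{x}$ orthogonal to both, the triangle inequality gives $\|(\mathbf{A}+\mathbf{B})\mathbf{x}\|\le\sigma_i(\mathbf{A})+\sigma_j(\mathbf{B})$.

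Now apply Weyl with $\mathbf{A}=\mathbf{X}-\mathbf{Y}$, $\mathbf{B}=\mathbf{Y}$, $j=k+1$. Since $\mathrm{rank}(\mathbf{Y})\le k$, we have $\sigma_{k+1}(\mathbf{Y})=0$. This yields
\[
\sigma_{i+k}(\mathbf{X})\le\sigma_i(\mathbf{X}-\mathbf{Y})\quad\text{for all } i\ge 1.
\]
Squaring and summing over $i=1,\dots,Q-k$ gives
\[
\|\mathbf{X}-\mathbf{Y}\|_F^2\;\ge\;\sum_{i=1}^{Q-k}\sigma_i(\mathbf{X}-\mathbf{Y})^2\;\ge\;\sum_{i=k+1}^{Q}\sigma_i^2 ,
\]
which matches the truncation error and proves $\mathbf{X}_k$ is a minimizer.

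The main obstacle is establishing the variational characterization of singular values cleanly. I would derive it from the Courant--Fischer min-max theorem applied to the symmetric matrix $\mathbf{A}^\top\mathbf{A}$, whose eigenvalues are $\sigma_i^2$. An alternative that avoids Weyl is a direct argument: choose a unit vector $\mathbf{x}$ in $\ker(\mathbf{Y})\cap\operatorname{span}(\mathbf{v}_1,\dots,\mathbf{v}_{k+1})$, which is nonzero by a dimension count. This handles the spectral-norm case immediately, but the Frobenius case still needs the full index-shifted inequality, so I would commit to the Weyl route.
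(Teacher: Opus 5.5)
Your proof is correct and complete. The truncation error $\sum_{i>k}\sigma_i^2$, the Courant--Fischer derivation of Weyl's inequality, the choice $\mathbf{A}=\mathbf{X}-\mathbf{Y}$, $\mathbf{B}=\mathbf{Y}$, $j=k+1$ with $\sigma_{k+1}(\mathbf{Y})=0$, and the final summation all go through. Because you only use indices $i\le Q-k$, the subspace $\mathcal{W}_A+\mathcal{W}_B$ has dimension at most $i+k-1<H$, so its orthogonal complement is never trivial.

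The paper gives no argument of its own. It states the result as a classical theorem with a citation to Eckart and Young, and uses it only to motivate summarizing the spectrum by effective rank. Compared with that appeal to the literature, your self-contained route proves more than is stated. The index-wise domination $\sigma_i(\mathbf{X}-\mathbf{Y})\ge\sigma_{i+k}(\mathbf{X})=\sigma_i(\mathbf{X}-\mathbf{X}_k)$ shows that $\mathbf{X}_k$ is optimal simultaneously in the spectral norm and in every unitarily invariant norm (the extension usually credited to Mirsky), not just in the Frobenius norm.

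One small correction to your closing remark: the Frobenius case does not strictly need the index-shifted inequality. Let $\mathbf{P}$ be the orthogonal projector onto the row space of $\mathbf{Y}$. Then $\mathbf{Y}(\mathbf{I}-\mathbf{P})=\mathbf{0}$ gives $\|\mathbf{X}-\mathbf{Y}\|_F^2\ge\|\mathbf{X}(\mathbf{I}-\mathbf{P})\|_F^2=\|\mathbf{X}\|_F^2-\|\mathbf{X}\mathbf{P}\|_F^2$. Ky Fan's maximum principle then bounds $\|\mathbf{X}\mathbf{P}\|_F^2=\mathrm{tr}(\mathbf{X}^\top\mathbf{X}\mathbf{P})\le\sum_{i\le k}\sigma_i^2$. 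That shortcut is Frobenius-specific, though, so committing to the Weyl route is the better choice if you want the stronger conclusion.
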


\begin{theorem}[Expectation Equivalence under Attention Noise Injection]\label{thm:anme}

\textbf{Scenario 1: random $\mathbf{QKV}$.}
Consider an attention head with $N$ key-value pairs. Let $Q_{\text{noise}}, K_{\text{noise}}, V_{\text{noise}}$ be the random Gaussian replacements for the original query, key, value matrices, where each has the same mean and variance as the original $Q, K, V$ respectively. The attention output in scenario (1) (replacing $Q,K,V$ by noise) for a single query can be written as a weighted sum of the value vectors:
\[ 
Y_{\text{noise}} \;=\; \sum_{i=1}^N a_i\, v_i^{\text{(noise)}},
\] 
where $v_i^{\text{(noise)}}$ is the $i$-th row of $V_{\text{noise}}$ and $a_i$ is the attention weight for key $i$ given by the softmax:
\[ 
a_i \;=\; \frac{\exp\!\big((q^{\text{(noise)}})^\top k_i^{\text{(noise)}}/\sqrt{d}\big)}{\sum_{j=1}^N \exp\!\big((q^{\text{(noise)}})^\top k_j^{\text{(noise)}}/\sqrt{d}\big)}, 
\] 
with $q^{\text{(noise)}}$ the query vector and $k_i^{\text{(noise)}}$ the $i$-th key (row of $K_{\text{noise}}$). By construction of the softmax, $\sum_{i=1}^N a_i = 1$ for any realization. Under the assumption that the random keys $(k_1^{\text{(noise)}},\dots,k_N^{\text{(noise)}})$ are i.i.d. (making all key positions statistically symmetric), the attention weights $\{a_i\}$ are an exchangeable set. In particular, by symmetry we have $\mathbb{E}[a_i] = \frac{1}{N}$ for each $i$. Now taking expectation of $Y_{\text{noise}}$ (over the random $Q_{\text{noise}},K_{\text{noise}},V_{\text{noise}}$) and using the law of total expectation, we get: 
\[
\mathbb{E}[Y_{\text{noise}}] \;=\; \mathbb{E}\Big[\sum_{i=1}^N a_i\, v_i^{\text{(noise)}}\Big] 
\;=\; \mathbb{E}\Big[\mathbb{E}\big[\sum_{i=1}^N a_i\, v_i^{\text{(noise)}} \mid V_{\text{noise}}\big]\Big]. 
\] 
Conditioning on the random values $V_{\text{noise}} = \{v_i^{\text{(noise)}}\}_{i=1}^N$, the attention weights are independent of $V_{\text{noise}}$ and still satisfy $\mathbb{E}[a_i \mid V_{\text{noise}}] = \frac{1}{N}$. Thus 
\[ 
\mathbb{E}\Big[\sum_{i=1}^N a_i\, v_i^{\text{(noise)}} \,\Big|\, V_{\text{noise}}\Big] \;=\; \sum_{i=1}^N \mathbb{E}[a_i \mid V_{\text{noise}}]\, v_i^{\text{(noise)}} \;=\; \frac{1}{N}\sum_{i=1}^N v_i^{\text{(noise)}}. 
\] 
The right-hand side is simply the average of the $N$ i.i.d. random value vectors. Therefore, its expectation is the mean of the $V_{\text{noise}}$ distribution: 
\[ 
\mathbb{E}[Y_{\text{noise}}] \;=\; \mathbb{E}\Big[\frac{1}{N}\sum_{i=1}^N v_i^{\text{(noise)}}\Big] \;=\; \mathbb{E}[v_i^{\text{(noise)}}] \;=\; \mu_V, 
\] 
where $\mu_V$ denotes the mean of the original $V$ (and $V_{\text{noise}}$) distribution. 

\textbf{Scenario 2: random $\mathbf{\Delta}$}. In scenario (2), where we directly replace the final attention output with Gaussian noise of the same distribution as the true $Y = A V$ (with $A$ the attention matrix), the injected output $Y_{\text{direct}}$ is a Gaussian random vector with mean set to $\mu_Y$, the mean of the original attention output. Typically, if the original model’s parameters are approximately zero-mean (as is common in weight initialization), the distribution of the true attention output $Y$ will have mean $\mu_Y \approx 0$. In our case above, we found $\mu_Y = \mu_V$, since the attention mechanism produces a convex combination of the values. Under the assumption that the original attention output’s mean $\mu_Y$ equals $\mu_V$ (which holds, for example, if weights are zero-centered so that queries and keys induce no bias in attention, or more generally under the symmetry argument given), we have 
\[ 
\mathbb{E}[Y_{\text{direct}}] = \mu_Y = \mu_V = \mathbb{E}[Y_{\text{noise}}]. 
\] 
Thus, the mean of the noise-injected output in scenario (1) is the same as the mean of the direct noise output in scenario (2). In other words, both replacement strategies produce outputs with the same expected mean. 
\end{theorem}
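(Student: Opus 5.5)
The argument is a symmetry-plus-linearity computation of $\mathbb{E}[Y_{\text{noise}}]$ in Scenario 1, followed by matching the mean fixed in Scenario 2. For Scenario 1 the key modeling choice is the independence structure: $q^{(\text{noise})}$, the keys $k_1^{(\text{noise})},\dots,k_N^{(\text{noise})}$, and the values $v_1^{(\text{noise})},\dots,v_N^{(\text{noise})}$ are mutually independent Gaussians, with the keys i.i.d. and the values i.i.d. with mean $\mu_V$. I would fix this explicitly at the start, since every later step rests on it.

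\textbf{Step 1: exchangeability of the weights.} Write $a_i = f_i(q,k_1,\dots,k_N)$, where $f$ is the softmax of the scaled scores. For any permutation $\pi$ of $\{1,\dots,N\}$, permuting the keys permutes the softmax outputs accordingly. The joint law of $(q,k_1,\dots,k_N)$ is invariant under such permutations because the keys are i.i.d. and independent of $q$. Hence $(a_1,\dots,a_N)$ is exchangeable, so all $\mathbb{E}[a_i]$ are equal. The constraint $\sum_i a_i=1$ holds pointwise, and the $a_i$ are bounded in $[0,1]$, so expectations exist. Together these give $\mathbb{E}[a_i]=1/N$.

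\textbf{Step 2: decoupling weights from values.} The weights $a_i$ are measurable functions of $(q,K_{\text{noise}})$ alone, which is independent of $V_{\text{noise}}$. Therefore $\mathbb{E}[a_i\mid V_{\text{noise}}]=\mathbb{E}[a_i]=1/N$. By the tower property and linearity,
\[
\mathbb{E}[Y_{\text{noise}}]=\mathbb{E}\Big[\tfrac1N\textstyle\sum_i v_i^{(\text{noise})}\Big]=\mu_V .
\]
An equivalent shortcut is $\mathbb{E}[a_i v_i]=\mathbb{E}[a_i]\,\mathbb{E}[v_i]$ by independence. Integrability holds because $|a_i|\le 1$ and Gaussian values have finite first moments.

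\textbf{Step 3: Scenario 2.} Here $Y_{\text{direct}}$ is Gaussian with mean set to $\mu_Y$, the mean of the true output $Y=AV$, so $\mathbb{E}[Y_{\text{direct}}]=\mu_Y$ by construction. The only remaining work is to identify $\mu_Y=\mu_V$. This is the real obstacle, because the true $A$ depends on the same hidden states as $V$, and in general $\mathbb{E}[\sum_i a_i v_i]\neq\mu_V$. I would handle it by stating it as a hypothesis, as the statement does. I would then give a sufficient condition under which it holds: the true attention weights are uncorrelated with the true values and the $v_i$ share a common mean $\mu_V$. Then $\mathbb{E}[\sum_i a_i v_i]=\sum_i \mathbb{E}[a_i]\,\mu_V=\mu_V$, using $\sum_i a_i=1$ without needing exchangeability of $A$.

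Alternatively, one can avoid the issue entirely by noting that the experimental protocol directly matches the noise mean to that of $\Delta\mathbf{X}_{\mathrm{attn}}$. Under that hypothesis, chaining the equalities gives $\mathbb{E}[Y_{\text{direct}}]=\mu_Y=\mu_V=\mathbb{E}[Y_{\text{noise}}]$, which is the claim.
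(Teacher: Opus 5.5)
Your proposal is correct and follows the paper's own argument. Permutation symmetry of the i.i.d.\ keys gives $\mathbb{E}[a_i]=1/N$, independence of $(q^{(\text{noise})},K_{\text{noise}})$ from $V_{\text{noise}}$ together with the tower property gives $\mathbb{E}[Y_{\text{noise}}]=\mu_V$, and Scenario 2 reduces to the assumed identity $\mu_Y=\mu_V$; you improve on the paper by stating the independence assumptions it uses implicitly, and by treating $\mu_Y=\mu_V$ as a genuine hypothesis rather than a consequence of attention being a convex combination, with a correct sufficient condition (weights uncorrelated with values that share a common mean).
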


\begin{theorem}[Manifold Coincidence Theorem for RID]\label{thm:mct}
We aim to show that if $\mathrm{RID}(X \mid X') = 0$, then $X$ and $X'$ share the same manifold structure – in particular, $X'$ lies in the same underlying subspace as $X$ with equivalent spectral complexity. By the definition of \textbf{Representation Information Discrepancy (RID)}, we have
\[
\mathrm{RID}(X \mid X') \;=\; \Delta \mathcal{S}(X \mid X') \;+\; \Delta \mathcal{D}(X \mid X').
\]
The condition $\mathrm{RID}(X\mid X')=0$ necessitates that both non-negative components vanish: $\Delta \mathcal{S}(X \mid X') = 0$ and $\Delta \mathcal{D}(X \mid X') = 0$.

Firstly, the condition $\Delta \mathcal{S}(X \mid X')=0$ implies the invariance of the spectral complexity as measured by the effective rank. Since the effective rank serves as a continuous proxy for the number of active degrees of freedom, its conservation indicates that the intrinsic dimensionality of the representation remains unchanged. Under the manifold hypothesis characterizing $\mathbf{X}$, this implies that the algebraic rank is preserved, i.e., $\operatorname{rank}(X') = \operatorname{rank}(X) = r$. Consequently, both matrices reside within the same fixed-rank manifold geometry $\mathcal{M}_r$.

Secondly, $\Delta \mathcal{D}(X \mid X')=0$ signifies that $X'$ introduces no new \emph{information support} relative to $X$. By the definition of support innovation, the projection residuals must be zero:
\[
\big\|(I - \mathbf{P}_{\mathcal{C}(X)})\,X' \big\|_F = 0, \qquad
\big\|X'\, (I - \mathbf{P}_{\mathcal{R}(X)}) \big\|_F = 0,
\]
where $\mathbf{P}_{\mathcal{C}(X)}$ and $\mathbf{P}_{\mathcal{R}(X)}$ are the orthogonal projectors onto the column space $\mathcal{C}(X)$ and row space $\mathcal{R}(X)$ of $X$, respectively. These conditions are algebraically equivalent to:
\[
\mathcal{C}(X') \subseteq \mathcal{C}(X), \qquad
\mathcal{R}(X') \subseteq \mathcal{R}(X).
\]
Having established that $\operatorname{rank}(X') = \operatorname{rank}(X) = r$, it follows that $\dim(\mathcal{C}(X')) = \dim(\mathcal{C}(X)) = r$. A fundamental result in linear algebra states that if a subspace $\mathcal{V}$ is contained in a subspace $\mathcal{W}$ of the same finite dimension, then $\mathcal{V} = \mathcal{W}$. Therefore, we conclude:
\[
\mathcal{C}(X') = \mathcal{C}(X), \qquad \mathcal{R}(X') = \mathcal{R}(X).
\]
This proves that $X'$ shares exactly the same left and right singular vector subspaces as $X$, meaning the \emph{information support} is identical: $\mathcal{D}_{X'} = \mathcal{D}_X$. Combined with the unchanged spectrum ($\mathcal{S}_{X'} = \mathcal{S}_X$), we have
\[
\mathcal{I}(X') \;=\; (\mathcal{S}_{X'},\, \mathcal{D}_{X'}) \;=\; (\mathcal{S}_{X},\, \mathcal{D}_{X}) \;=\; \mathcal{I}(X).
\]
In conclusion, when $\mathrm{RID}(X\mid X')=0$, $X'$ contains no new representation information compared to $X$. Geometrically, $X$ and $X'$ coincide in the manifold parameterization: they possess the same rank and occupy the same supporting subspaces. Thus, $X$ and $X'$ \textbf{share one manifold space}, differing only by an internal reconfiguration of information within that shared subspace.
\end{theorem}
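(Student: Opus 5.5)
The plan is to split $\mathrm{RID}(\mathbf{X}\mid\mathbf{X}')=0$ into its two non-negative summands and extract one geometric consequence from each. By Lemma~\ref{lem:rid_range}, both $\Delta\mathcal{S}$ and $\Delta\mathcal{D}$ lie in $[0,1]$. Their sum vanishes only if $\Delta\mathcal{S}(\mathbf{X}\mid\mathbf{X}')=0$ and $\Delta\mathcal{D}(\mathbf{X}\mid\mathbf{X}')=0$. The first gives $\mathrm{eRank}(\mathbf{X}')=\mathrm{eRank}(\mathbf{X})$, i.e.\ $\mathcal{S}_{\mathbf{X}'}=\mathcal{S}_{\mathbf{X}}$ directly from Definition~\ref{def:rank}.

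\textbf{Step 1 (support inclusion).} From $\Delta\mathcal{D}=0$, each of the two non-negative Frobenius norms in the numerator vanishes (assuming $\mathbf{X}'\neq 0$ so the quantity is defined). Hence $(\mathbf{I}-\mathbf{P}_{\mathcal{C}(\mathbf{X})})\mathbf{X}'=0$ and $\mathbf{X}'(\mathbf{I}-\mathbf{P}_{\mathcal{R}(\mathbf{X})})=0$. Equivalently, $\mathbf{P}_{\mathcal{C}(\mathbf{X})}\mathbf{X}'=\mathbf{X}'=\mathbf{X}'\mathbf{P}_{\mathcal{R}(\mathbf{X})}$. Every column of $\mathbf{X}'$ then lies in $\mathcal{C}(\mathbf{X})$, and every row lies in $\mathcal{R}(\mathbf{X})$, so $\mathcal{C}(\mathbf{X}')\subseteq\mathcal{C}(\mathbf{X})$ and $\mathcal{R}(\mathbf{X}')\subseteq\mathcal{R}(\mathbf{X})$. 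In particular, $\operatorname{rank}(\mathbf{X}')\le\operatorname{rank}(\mathbf{X})=r$.

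\textbf{Step 2 (rank equality), the main obstacle.} To upgrade the inclusions to equalities we need $\operatorname{rank}(\mathbf{X}')\ge r$. Equality of effective ranks does \emph{not} by itself force equality of algebraic ranks. For example, $\mathrm{diag}(1,\delta,\delta)$ and a suitable $\mathrm{diag}(1,c)$ can have the same eRank. So I would not rely on eRank alone. Instead I would use the residual-update structure $\mathbf{X}'=\mathbf{X}+\Delta\mathbf{X}$ together with the manifold hypothesis (Assumption~\ref{assump:manifold}), read as the requirement that the update does not collapse the representation off $\mathcal{M}_r$. Concretely, I would assume $\|\Delta\mathbf{X}\|_2<\sigma_r(\mathbf{X})$. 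Weyl's inequality for singular values then gives $\sigma_r(\mathbf{X}')\ge\sigma_r(\mathbf{X})-\|\Delta\mathbf{X}\|_2>0$, hence $\operatorname{rank}(\mathbf{X}')\ge r$. This is also the lower semicontinuity of rank, consistent with Theorem~\ref{thm:eym}: the distance from $\mathbf{X}$ to rank-$(r-1)$ matrices is at least $\sigma_r(\mathbf{X})$. I would state this non-degeneracy condition explicitly as the precise content of ``under the manifold hypothesis'' in the statement.

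\textbf{Step 3 (conclusion).} With $\operatorname{rank}(\mathbf{X}')=r$, we have $\dim\mathcal{C}(\mathbf{X}')=\dim\mathcal{C}(\mathbf{X})=r$, and likewise for the row spaces. A subspace contained in another of equal finite dimension coincides with it, so $\mathcal{C}(\mathbf{X}')=\mathcal{C}(\mathbf{X})$ and $\mathcal{R}(\mathbf{X}')=\mathcal{R}(\mathbf{X})$. Orthogonal projectors are determined by their ranges, so $\mathbf{P}_{\mathcal{C}(\mathbf{X}')}=\mathbf{P}_{\mathcal{C}(\mathbf{X})}$ and $\mathbf{P}_{\mathcal{R}(\mathbf{X}')}=\mathbf{P}_{\mathcal{R}(\mathbf{X})}$, i.e.\ $\mathcal{D}_{\mathbf{X}'}=\mathcal{D}_{\mathbf{X}}$. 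Combined with $\mathcal{S}_{\mathbf{X}'}=\mathcal{S}_{\mathbf{X}}$, this yields $\mathcal{I}(\mathbf{X}')=\mathcal{I}(\mathbf{X})$.

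Finally, both matrices lie in $\mathcal{M}_r$ with the same pair of Grassmann points. Hence $\mathbf{X}'=\mathbf{U}\mathbf{M}\mathbf{V}^\top$ for some invertible $r\times r$ matrix $\mathbf{M}$, which makes precise the ``internal reconfiguration within the shared subspace'' claimed in the statement.
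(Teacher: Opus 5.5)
Your argument follows the paper's skeleton. You split $\mathrm{RID}=0$ into $\Delta\mathcal{S}=0$ and $\Delta\mathcal{D}=0$, read off $\mathcal{C}(\mathbf{X}')\subseteq\mathcal{C}(\mathbf{X})$ and $\mathcal{R}(\mathbf{X}')\subseteq\mathcal{R}(\mathbf{X})$ from the vanishing residuals, and close by a dimension count. Where you differ is the one step that carries the weight: obtaining $\operatorname{rank}(\mathbf{X}')=\operatorname{rank}(\mathbf{X})$.

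The paper extracts this from $\mathrm{eRank}(\mathbf{X}')=\mathrm{eRank}(\mathbf{X})$ ``under the manifold hypothesis.'' In that route the spectral term is what pins down the rank, and no explicit condition on the update is stated. You instead get $\operatorname{rank}(\mathbf{X}')\le r$ from the support inclusion itself. You get $\operatorname{rank}(\mathbf{X}')\ge r$ from Weyl's inequality under the explicit bound $\|\Delta\mathbf{X}\|_2<\sigma_r(\mathbf{X})$. $\Delta\mathcal{S}=0$ is then used only to record $\mathcal{S}_{\mathbf{X}'}=\mathcal{S}_{\mathbf{X}}$.

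Yours is the sound route, and you can state more strongly than you do that an extra hypothesis is unavoidable. Pad your example to $\mathbf{X}=\mathrm{diag}(1,\delta,\delta)$ and $\mathbf{X}'=\mathrm{diag}(1,c,0)$ with matched effective ranks, e.g.\ $\delta=0.1$, $c\approx 0.34$. Then $\mathbf{P}_{\mathcal{C}(\mathbf{X})}=\mathbf{P}_{\mathcal{R}(\mathbf{X})}=\mathbf{I}$, so $\Delta\mathcal{D}=0$ and $\mathrm{RID}(\mathbf{X}\mid\mathbf{X}')=0$ exactly. Yet $\operatorname{rank}(\mathbf{X}')=2<3$ and $\mathcal{C}(\mathbf{X}')\neq\mathcal{C}(\mathbf{X})$. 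This is a counterexample to the statement as written, not merely to the eRank-implies-rank step. Consistently, it violates your condition: $\|\Delta\mathbf{X}\|_2\approx 0.24>\sigma_3(\mathbf{X})=0.1$.

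The paper's appeal to the manifold hypothesis in effect just assumes $\mathbf{X}'\in\mathcal{M}_r$. Had it been valid, the paper's route would give an unconditional result in which the spectral half of RID does real work. Your route gives a correct, explicitly conditional theorem, with a checkable sufficient condition for that membership.
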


\begin{figure*}[ht]
  \vskip 0.2in
  \begin{center}
    \centerline{\includegraphics[width=\textwidth]{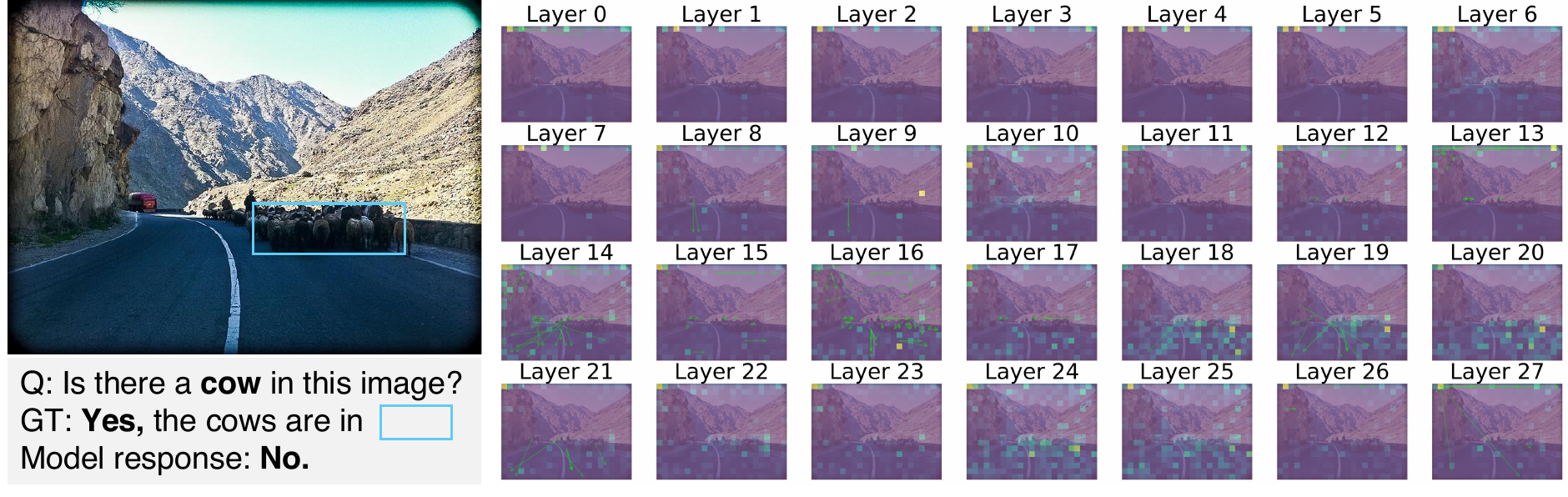}}
    \caption{\textbf{Case 1.} Layer-wise visual attention tracing. Only layer 23 exhibits cross-patch interactions within the key region (cows).}
    \label{fig:case1}
  \end{center}
\end{figure*}

\begin{figure*}[h]
  \vskip 0.2in
  \begin{center}
    \centerline{\includegraphics[width=\textwidth]{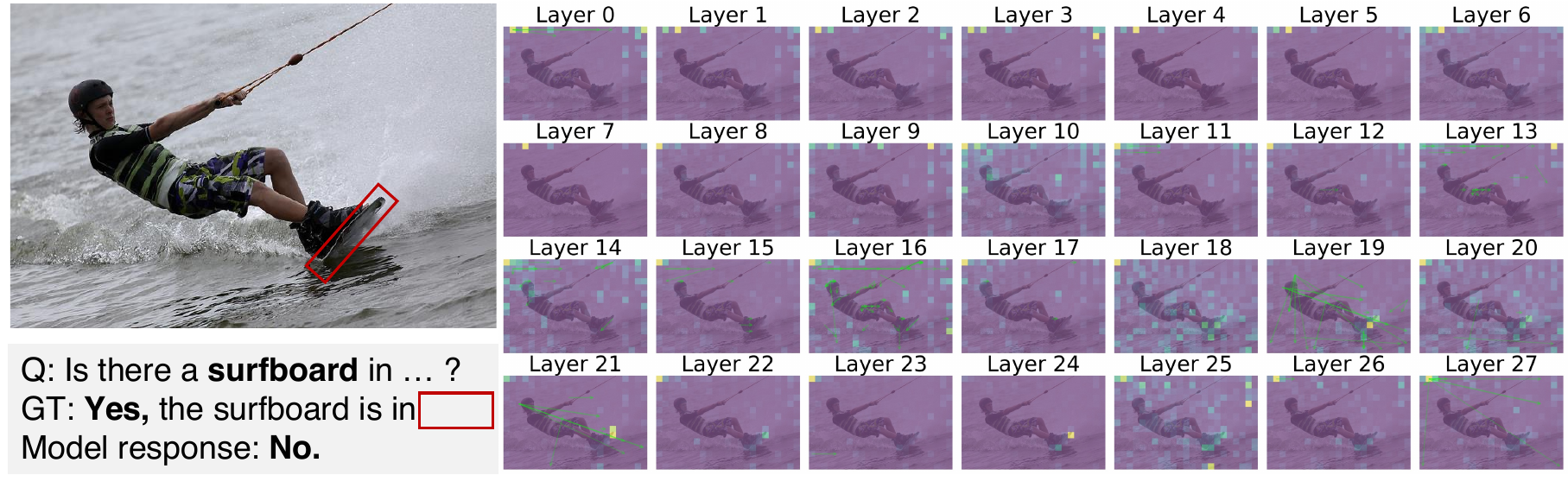}}
    \caption{\textbf{Case 2.} Layer-wise visual attention tracing. Layer 23 and 26 exhibit cross-patch interactions within the key region (surfboard).}
    \label{fig:case2}
  \end{center}
\end{figure*}

\begin{figure*}[h]
  \vskip 0.2in
  \begin{center}
    \centerline{\includegraphics[width=\textwidth]{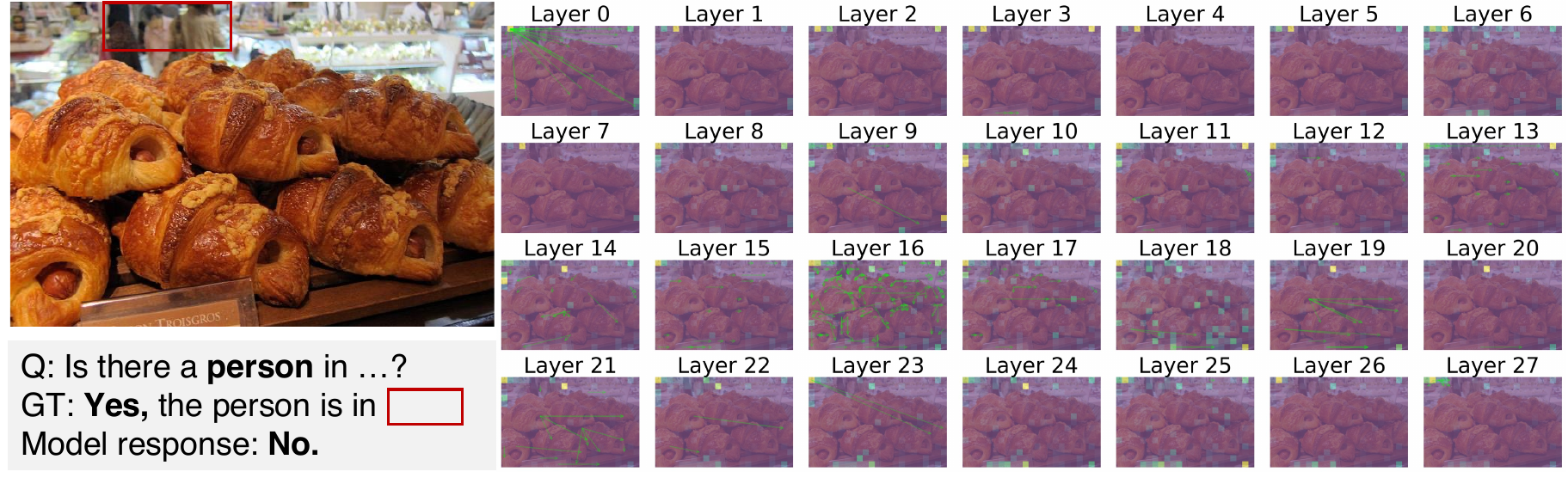}}
    \caption{\textbf{Case 3.} Layer-wise visual attention tracing. Layer 16 and 17 exhibit cross-patch interactions within the key region (person).}
    \label{fig:case3}
  \end{center}
\end{figure*}

\begin{figure*}[h]
  \vskip 0.2in
  \begin{center}
    \centerline{\includegraphics[width=\textwidth]{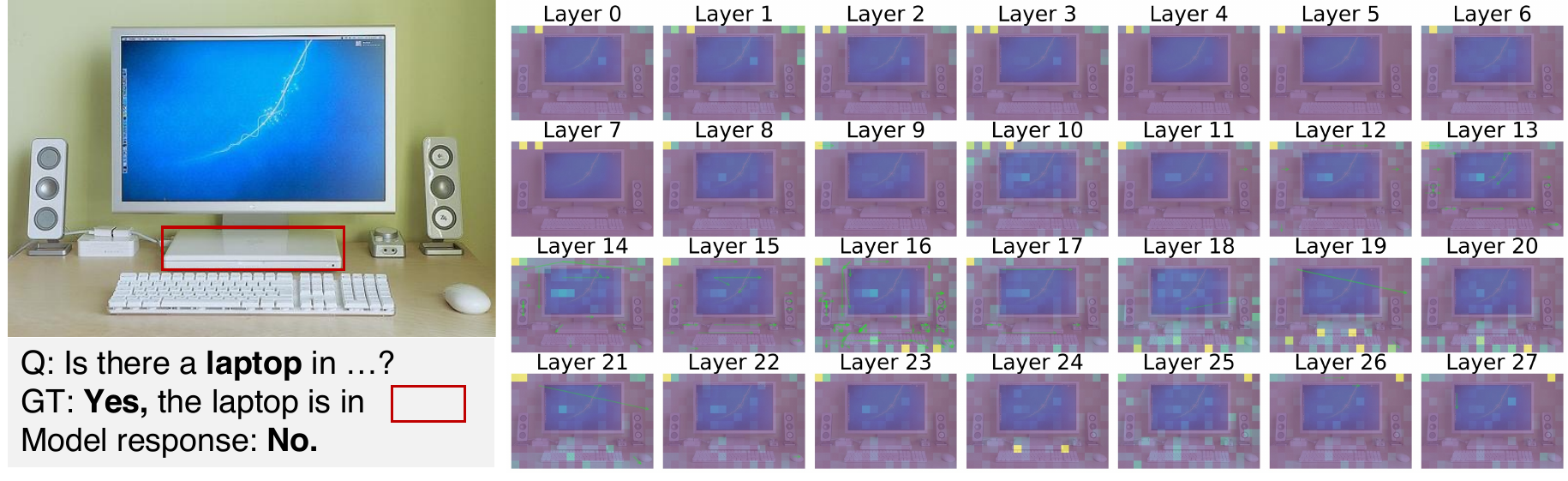}}
    \caption{\textbf{Case 4.} Layer-wise visual attention tracing. Layer 20-24 exhibit cross-patch interactions within the key region (laptop).}
    \label{fig:case4}
  \end{center}
\end{figure*}

\begin{figure*}[h]
  \vskip 0.2in
  \begin{center}
    \centerline{\includegraphics[width=\textwidth]{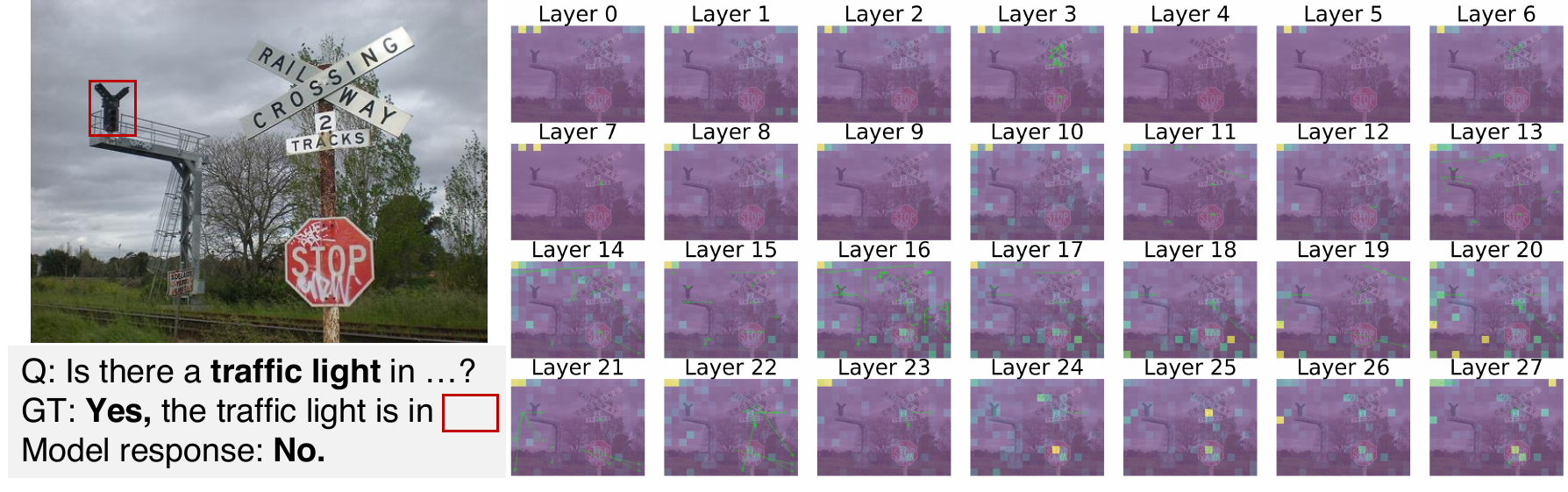}}
    \caption{\textbf{Case 5.} Layer-wise visual attention tracing. Layer 23 and 24 exhibit cross-patch interactions within the key region (traffic light).}
    \label{fig:case5}
  \end{center}
\end{figure*}

\begin{figure*}[h]
  \vskip 0.2in
  \begin{center}
    \centerline{\includegraphics[width=\textwidth]{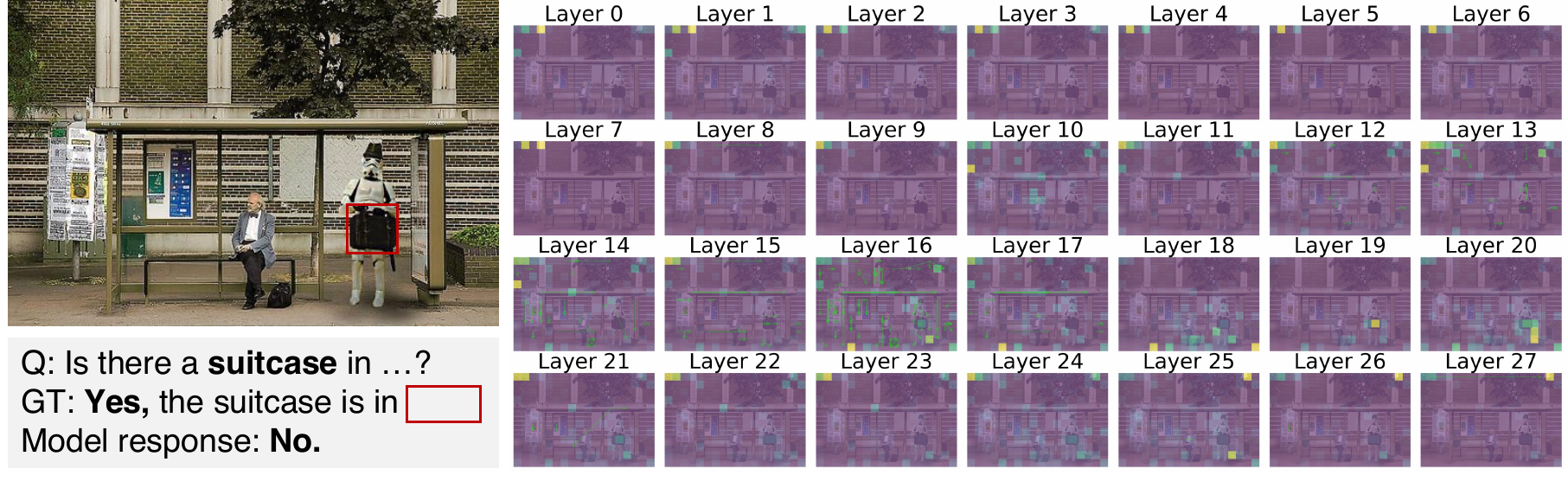}}
    \caption{\textbf{Case 6.} Layer-wise visual attention tracing. Layer 20, 22 and 23 exhibits cross-patch interactions within the key region (suitcase).}
    \label{fig:case6}
  \end{center}
\end{figure*}


\end{document}